\theoremstyle{plain}
\newtheorem{theorem}{Theorem}[section]
\newtheorem{proposition}[theorem]{Proposition}
\newtheorem{lemma}[theorem]{Lemma}
\theoremstyle{definition}
\newtheorem{definition}[theorem]{Definition}
\newtheorem{assumption}[theorem]{Assumption}
\theoremstyle{remark}
\newtheorem{remark}[theorem]{Remark}
\newcommand\VECTOR{\mathbf}  
\newcommand\SPACE{\mathcal}  
\newcommand{\blue}[1]{\textcolor{blue}{#1}}
\icmltitlerunning{Reachability Constrained Reinforcement Learning}
\begin{document}

\twocolumn[
\icmltitle{Reachability Constrained Reinforcement Learning}



\icmlsetsymbol{equal}{*}
\begin{icmlauthorlist}
\icmlauthor{Dongjie Yu}{equal,svm}
\icmlauthor{Haitong Ma}{equal,svm,seas}
\icmlauthor{Shengbo Eben Li}{svm}
\icmlauthor{Jianyu Chen}{iiis,qizhi}

\end{icmlauthorlist}

\icmlaffiliation{svm}{School of Vehicle and Mobility, Tsinghua University, Beijing, China}
\icmlaffiliation{iiis}{Institute for Interdisciplinary Information Sciences, Tsinghua University, Beijing, China}
\icmlaffiliation{qizhi}{Shanghai Qizhi Institute, Shanghai, China}
\icmlaffiliation{seas}{John A. Paulson School of Engineering and Applied Sciences, Harvard University, Cambridge, Massachusetts, USA. This work was conducted during Haitong's graduate study at Tsinghua University.}

\icmlcorrespondingauthor{Shengbo Eben Li}{lishbo@tsinghua.edu.cn}

\icmlkeywords{Safe Reinforcement Learning, Constrained Reinforcement Learning, Reinforcement Learning, Reachability Analysis, Feasible Set}

\vskip 0.3in
]



\printAffiliationsAndNotice{\icmlEqualContribution} 

\begin{abstract}

Constrained reinforcement learning (CRL) has gained significant interest recently, since safety constraints satisfaction is critical for real-world problems. However, existing CRL methods constraining discounted cumulative costs generally lack rigorous definition and guarantee of safety. In contrast, in the safe control research, safety is defined as persistently satisfying certain state constraints. Such persistent safety is possible only on a subset of the state space, called feasible set, where an optimal largest feasible set exists for a given environment. Recent studies incorporate feasible sets into CRL with energy-based methods such as control barrier function (CBF), safety index (SI), and leverage prior conservative estimations of feasible sets, which harms the performance of the learned policy. To deal with this problem, this paper proposes the reachability CRL (RCRL) method by using reachability analysis to establish the novel self-consistency condition and characterize the feasible sets. The feasible sets are represented by the safety value function, which is used as the constraint in CRL. We use the multi-time scale stochastic approximation theory to prove that the proposed algorithm converges to a local optimum, where the largest feasible set can be guaranteed. Empirical results on different benchmarks validate the learned feasible set, the policy performance, and constraint satisfaction of RCRL, compared to CRL and safe control baselines.
\end{abstract}

\section{Introduction}
\label{sec.introduction}

Constrained reinforcement learning (CRL) has gained growing attention due to the safety requirements in the practical applications of RL. The safety specifications in common CRL methods are expected discounted cumulative costs \cite{altman1999cmdp, achiam2017cpo, tessler2018rcpo, Yang2020pcpo}. However, the main deficiency of expected costs is averaging the potential danger at a state to the whole trajectory. For example, the autonomous vehicle should always keep a safe distance to other traffic participants but not keep the cumulative or average distance during a period when the collision might happen at a single time step. Therefore, constraints imposed on expected cumulative costs lack rigorous definition and guarantee of safety.

\begin{figure}
    \centering
    \includegraphics[width=0.55\linewidth]{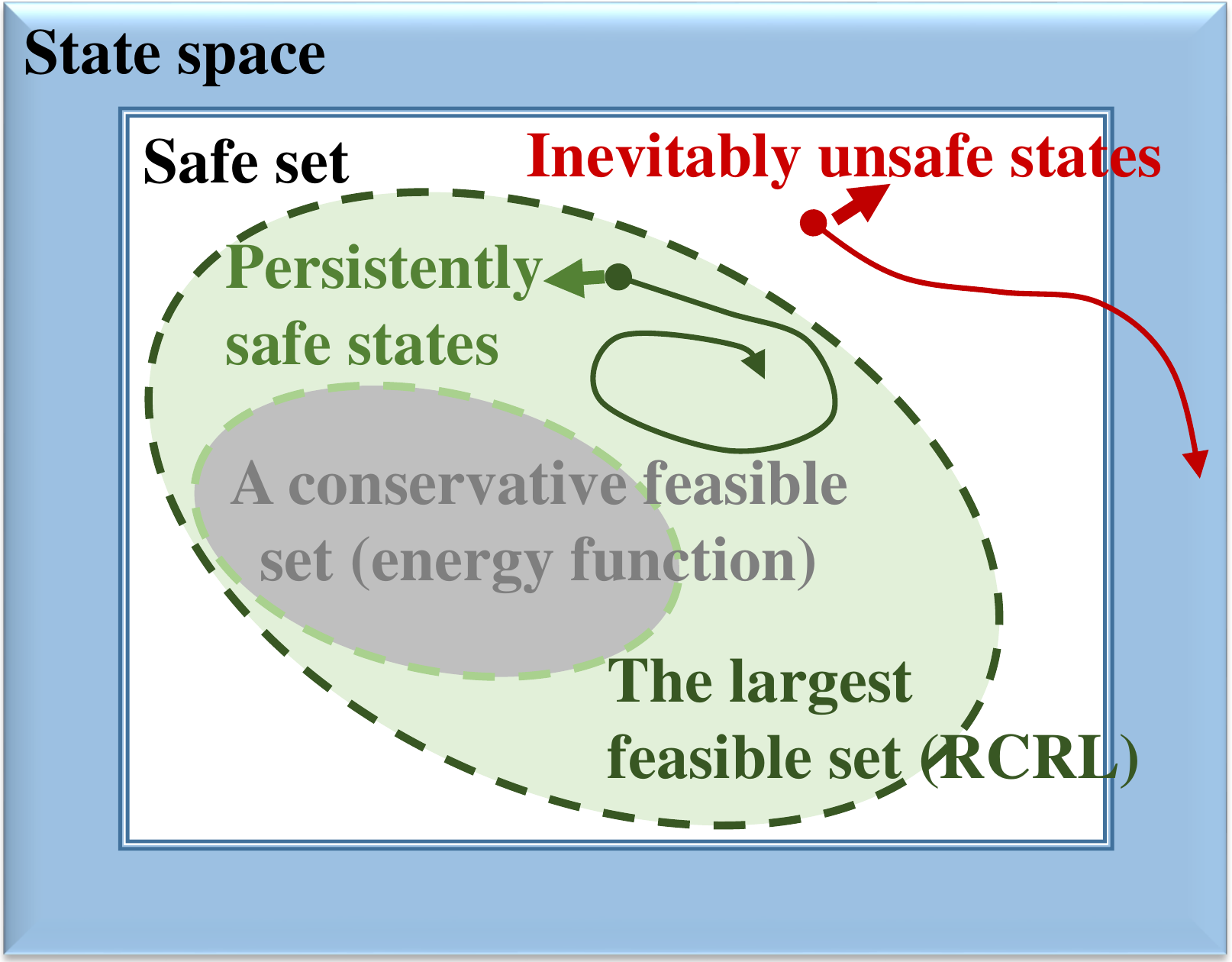}
    \caption{The intuitive relationship among the state space, the safe states, feasible sets and the largest feasible set.}
    \vskip -0.1in
    \label{fig:feasible_set_schematic}
\end{figure}

Meanwhile, the persistent state constraint satisfaction in the safe control research clarifies the safety of states with rigorous definitions \cite{liu2014ssa, ames2019cbf, choi2021robust}. Rich theoretical and practical techniques for ensuring safety in such settings are provided, where an important fact is that only a subset of states can be guaranteed safe persistently, called the feasible set. Outside of the feasible set, even temporally safe states will violate the constraints inevitably in the future, no matter what policies to choose, as shown in \cref{fig:feasible_set_schematic}. For example, if a vehicle with high speed is too close to a front obstacle, it is doomed to crash since the deceleration capability is limited. Therefore, accurately identifying feasible sets in CRL can significantly affect the performance and safety of the learned policy.

Some recent RL studies adopt energy-based methods to handle persistent safety and characterize feasible sets. Representatives include control barrier function (CBF) \cite{ames2019cbf, ma2021cbf} and safety index (SI) \cite{liu2014ssa, ma2021joint}. However, these methods rely on prior formulation of the energy function, which results in conservative feasible sets (as \cref{fig:feasible_set_schematic} shows), causing unsatisfying performance sacrifice. Hamilton-Jacobi (HJ) reachability analysis is another branch in the safe control research, which identifies the theoretical largest feasible set \cite{lygeros1999controllers, mitchell2005reachability, bansal2017hj}. Recently, some pioneering studies migrated HJ reachability analysis to model-free RL \cite{fisac2019hjrl, hsu2021reach}. However, these works obtain only the safest policies, leaving the performance criterion (e.g., reward optimization) unconsidered. This safety-only design significantly limits broader applications of HJ reachability analysis in RL.

This paper proposes reachability constrained reinforcement learning (RCRL), which learns the optimal safe policy satisfying persistent safety within the identified largest feasible set.
We leverage reachability analysis to establish the novel self-consistency condition and characterize the feasible sets.
The feasible sets are represented by the safety value function. Intuitively, the function describes the worst constraint-violation in the long term, and its sub-zero level set is the feasible set.
We use the multi-time scale stochastic approximation theory \cite{borkar2009stochastic, chow2017cvar} to prove that the proposed algorithm converges to a local optimum. Empirical results on low-dimensional problems validate the correctness of the learned feasible sets. Further experiments conducted on complex benchmarks such as safe-control-gym \cite{yuan2021scg} and Safety-Gym \cite{Achiam2019safetygym} indicate that RCRL achieves competitive performance while maintaining constraint-satisfaction. Our main contributions are: 
\begin{itemize}
\setlength{\itemsep}{0pt}
\setlength{\parsep}{0pt}
\setlength{\parskip}{0pt}
    \item We are the first to introduce reachability constraints into CRL, which is critical for learning a nearly optimal and persistently safe policy upon its corresponding feasible set. Compared to other feasible set characterization methods, RCRL enlarges the feasible sets and reduces the policy conservativeness.
    \item We use the multi-time scale stochastic approximation theory to prove that RCRL converges to a locally optimal policy, which also persistently satisfies the state constraints across the entire largest feasible set if the initialization of states is general.
    \item Comprehensive experiments demonstrate that the proposed RCRL method outperforms CRL and safe control baselines in terms of final performance and constraint satisfaction.
\end{itemize}

\section{Related Work}
\label{sec.related_work}
\textbf{Constrained reinforcement learning (CRL)} problems are usually formulated as constrained Markov decision process (CMDP) \cite{altman1999cmdp, brunke2021review}. Constrained optimization approaches are adopted to solve CRL problems: (1) penalty function \cite{guan2021idc}; (2) Lagrangian methods 
\cite{tessler2018rcpo, chow2017cvar, duan2021adp, ma2021fac}; (3) trust-region methods \cite{achiam2017cpo, Yang2020pcpo} and (4) other approaches such as conservative updates \cite{bharadhwaj2021conservative}. CMDP relies on the expected discounted cumulative costs and a hand-crafted threshold to improve the safety of policies. However, a proper threshold relies on engineering intuitions and varies in different tasks \cite{qin2021dcrl}. 

\textbf{Characterizing feasible sets} is a critical and open problem in safe control research \cite{brunke2021review}. Feasible sets, also called recoverable sets \cite{thomas2021imagining}, are usually represented by safety certificates. Representative safety certificates include energy functions such as CBF \cite{ma2021cbf, choi2021robust, luo2021Learning} and SI \cite{liu2014ssa}. The core idea of the energy function is that the energy of a dynamical system dissipates when it is approaching the safer region \cite{ames2019cbf}. Nevertheless, energy-based methods suffer from conservative or inaccurate feasible sets \cite{ma2021joint}. HJ reachability analysis is a promising way towards general and rigorous derivation of feasible sets \cite{lygeros1999controllers, mitchell2005reachability}. However, it is quite difficult to obtain the largest feasible set because it is represented by a non-trivial partial differentiable equation, whose analytical solution is often intractable \cite{bansal2017hj}. Machine learning, especially RL approaches, is adopted to deal with this problem \cite{fisac2019hjrl, bansal2021deepreach, hsu2021reach}. However, most of the existing reachability studies only care about safety while ignoring other metrics, especially the optimality criterion, limiting reachability analysis approaches from broader applications. \citet{Thananjeyan2021recovery} pretrain a feasible set indicator for switching between the optimal policy and a back-up safe controller during training. A recent CRL study utilizes reachability analysis to learn a purely safe back-up policy \cite{chen2021ego}. Then the agent switches between the safe and optimal policies when interacting with the environment. Different from these switching-based methods, we only learn one policy tackling safety and optimality simultaneously.

\section{When RL Meets Feasible Sets}
\subsection{Notation}
\label{sec.notation}
We formulate the CRL problem as an MDP with a deterministic dynamic (a reasonable assumption in safe control problems), defined by the tuple $\langle \SPACE{S}, \SPACE{A}, P, r, h, c, \gamma\rangle$ where (1) the state space $\SPACE{S}$ and the action space $\SPACE{A}$ are bounded (possibly continuous); (2) unknown transition probability $P: \SPACE{S}\times\SPACE{A}\times\SPACE{S} \mapsto \{0,1\}$ represents the dynamics; (3) $r: \SPACE{S}\times\SPACE{A} \mapsto \mathbb{R}$ is the reward function; (4) $h: \SPACE{S}\mapsto\mathbb{R}$ is the state constraint. 
$c$ is called the cost signal where $c(s)=\mathbbm{1}_{h(s)>0}$, indicating we get $1$ if $h(s)\le0$ is violated and otherwise $0$. (5) $\gamma\in(0,1)$ is the discount factor. A deterministic policy $\pi: \SPACE{S}\mapsto\SPACE{A}$ chooses action $a_t$ at state $s_t$ at time $t$. The initial state distribution is denoted as $d_0(s)$ while $d_{\pi}(s,a)$ is the state-action marginals following $\pi$. We denote the initial state set as $\SPACE{S}_0\triangleq\{s\mid d_0(s)>0\}$.

The objective of standard RL is to find a policy maximizing the expected return (discounted cumulative rewards) $\mathcal{J}(\pi) = \mathbb{E}_{s_t,a_t \sim d_{\pi}} \sum_{t}[\gamma^t r(s_t,a_t)]$. A value function $V^{\pi}(s)\triangleq \mathbb{E}_{s_t,a_t \sim d_{\pi}} \sum_{t}[\gamma^t r(s_t,a_t)|s_0=s]$ represents the potential return in the future from state $s$, satisfying $V^{\pi}(s)=r(s,\pi(s))+\gamma\mathbb{E}_{s' \sim P}[V^{\pi}(s')]$. One can easily find that $\mathcal{J}(\pi)=\mathbb{E}_{s\sim d_0(s)}[V^{\pi}(s)]$.
In CRL, one can define discounted cumulative costs as cost return $\mathcal{J}_c(\pi) = \mathbb{E}_{s_t,a_t \sim d_{\pi}} \sum_{t}[\gamma^t c(s_t)]$ and cost value function $V_c^\pi(s)=c(s)+\gamma \mathbb{E}_{s'\sim P}[V^{\pi}(s')]$ similarly.

We need a few extended notations beyond standard ones. We denote $(s_{t}^\pi\mid s_0=s,\pi), {t\in\mathbb{N}}$ as the state trajectory $\{s_0^\pi,s_1^\pi,\cdots\mid s_0=s,\pi\}$ induced by $\pi$ from $s_0=s$. Let $h(s_{t}^{\pi}\mid s_0=s), t\in\mathbb{N}$ specify the state constraint sequence of the trajectory $\{h(s_0^\pi),h(s_1^\pi),h(s_2^\pi),\cdots\mid s_0=s,\pi\}$. We also denote $h(s_{t}^{\pi}\mid s_0=s)\le0,t\in\mathbb{N}$ as persistently satisfying the constraint, i.e. $h(s_t^\pi)\le0,\forall t \in \mathbb{N}$.

\subsection{Definition of Feasible Sets}
Generally speaking, a state is considered \emph{safe} if it satisfies the state constraint $h(s)\le0$, such as keeping distance from obstacles. The safe set is defined by the set of all safe states:
\begin{definition}[Safe set]
\begin{equation}
\nonumber
\SPACE{S}_c \triangleq \{s\mid h(s)\le0\}.
\end{equation}
\end{definition}
However, as stated in \cref{sec.introduction}, some safe states would go dangerous no matter what policy we choose, such as a high-speed vehicle close to a front obstacle. Therefore, what really matters for meaningful safety guarantee is not the \emph{temporary} safety but the \emph{persistent} safety, i.e., $h(s_{t}^{\pi}\mid s_0=s)\le0,t\in\mathbb{N}$. Otherwise, the system will be dangerous sooner or later. In other words, we need to characterize those states starting from which the policy $\pi$ is able to keep the system constraint-satisfactory. We define the feasible set as the set of all the states which are able to be safe persistently.
\begin{definition}[Feasible set]
\label{def.feasible_set}
The feasible set of \emph{a specific} policy $\pi$ can be defined as
\begin{equation}
\nonumber
\mathcal{S}_f^{\pi} \triangleq \{s \in \SPACE{S} \mid h(s_{t}^{\pi}\mid s_0=s)\le 0, t\in\mathbb{N} \}.
\end{equation}
A policy $\pi$ is feasible if $\mathcal{S}_f^{\pi}\ne\emptyset$ and otherwise it is infeasible.
The largest feasible set $\mathcal{S}_f$ is a subset of $\SPACE{S}$ composed of states from which there exists \emph{at least one} policy that keeps the system satisfying the constraint, i.e.,
\begin{equation}
\nonumber
\mathcal{S}_f \triangleq \{s \in \SPACE{S} \mid \exists \pi, h(s_{t}^{\pi}\mid s_0=s)\le 0, t\in\mathbb{N} \}.
\end{equation}
\end{definition}

To guarantee that all the states in the trajectory $\{s_{t}^\pi\mid s_0=s,\pi\}, {t\in\mathbb{N}}$ are safe, we only need to guarantee that the worst-case i.e., the \emph{maximum} violation in the trajectory is below zero, which brings the following definition:
\begin{definition}[Safety value function]
\label{def.safety_value_func}
\begin{equation}
\label{eq.def_feasibility}
    V^{\pi}_{h}(s) \triangleq \max_{t\in \mathbb{N}} h(s_{t}^{\pi}\mid s_0=s),
\end{equation}
is the worst constraint violation in the long term. 
\end{definition}

The safety value of a given state $s$ varies when the policy $\pi$ changes. The best value we can get is the one where we choose the policy minimizing the constraint violation and we call it the optimal safety value function:
\begin{equation}
\nonumber
\label{eq.optimal_feasibility}
    V^{*}_{h}(s) \triangleq \min_{\pi}\max_{t\in \mathbb{N}} h(s_{t}^{\pi}\mid s_0=s).
\end{equation}
One can easily observe that the (largest) feasible set is the sub-zero level set of the (optimal) safety value function, i.e.,
\begin{equation}
\nonumber
\SPACE{S}_f^\pi = \{s \mid V_{h}^{\pi}(s) \le 0\}, \quad
\SPACE{S}_f = \{s \mid V_{h}^{*}(s) \le 0\}.
\end{equation}
and clearly $\SPACE{S}_f^\pi \subseteq \SPACE{S}_f$ for $\forall \pi$.

Safety problems in reality are more about the worst-case through time other than the cumulative or average costs \cite{fisac2019hjrl}, where the latter is often the case in previous CRL. The safety value function measures the safety of the most dangerous state on the trajectory generated by $\pi$. Specifically, if $V^{\pi}_{h}(s) \le 0$, the most dangerous state is safe, so the safety of the system can be guaranteed. Otherwise, the policy $\pi$ would definitely cause state constraint violations in the future. Therefore, once $V^{\pi}_{h}(s) \le 0$, which we call that the \textit{reachability constraint} is fulfilled, the agent is guaranteed to be inside the feasible set since the state constraint could be satisfied persistently.



\subsection{Computation of the Safety Value Function}
Although the safety value function does not use the discounted cumulative formulation like the common value functions in RL, we can still use the temporal difference learning technique to get it. \citet{fisac2019hjrl} proposes the following lemma about the optimal safety value function:
\begin{lemma}[Safety Bellman equation (SBE)]
\label{theo.sbe}
\begin{equation}
\label{eq.sbe}
V^{*}_{h}(s) = \max\left\{h(s), \min_{a\in\SPACE{A}} V^{*}_{h}(s')\right\}
\end{equation}
holds for $\forall s \in \SPACE{S}$, where $s'$ is the successive state of state $s$.
\end{lemma}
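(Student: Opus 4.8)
The plan is to prove the SBE directly from the definition $V^{*}_{h}(s)=\min_{\pi}\max_{t\in\mathbb{N}}h(s_{t}^{\pi}\mid s_0=s)$ via a dynamic-programming (principle-of-optimality) argument, exploiting the deterministic dynamics. First I would split the running maximum at the initial step. Since $s_0^{\pi}=s$ for every $\pi$, the maximum over the horizon decomposes as
\[
\max_{t\in\mathbb{N}}h(s_{t}^{\pi}\mid s_0=s)=\max\left\{h(s),\ \max_{t\ge 1}h(s_{t}^{\pi}\mid s_0=s)\right\}.
\]
Taking $\min_{\pi}$ on both sides, the immediate obstacle is that $h(s)$ is a policy-independent constant whereas the tail term still depends on $\pi$.

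Second, I would invoke the elementary scalar identity $\min_{\pi}\max\{A,B(\pi)\}=\max\{A,\min_{\pi}B(\pi)\}$, valid whenever $A$ is constant (a short two-case verification according to whether $\min_{\pi}B(\pi)\ge A$). Applying it with $A=h(s)$ pulls the state-constraint value outside the minimization:
\[
V^{*}_{h}(s)=\max\left\{h(s),\ \min_{\pi}\max_{t\ge 1}h(s_{t}^{\pi}\mid s_0=s)\right\}.
\]

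Third, and this is the crux, I would establish that $\min_{\pi}\max_{t\ge 1}h(s_{t}^{\pi}\mid s_0=s)=\min_{a\in\SPACE{A}}V^{*}_{h}(s')$, where $s'$ is the successor of $s$ under action $a$. Because the dynamics is deterministic, the first action $a=\pi(s)$ fixes $s'=s_1$, and the remainder of the trajectory is governed solely by the continuation of $\pi$ from $s_1$ onward. By the principle of optimality, minimizing the tail maximum over the full policy decouples into first choosing $a$ and then optimally choosing the continuation policy from $s'$; re-indexing time through the shift $t\mapsto t-1$ identifies the optimal tail value with $V^{*}_{h}(s')$ itself. Substituting back into the previous display yields exactly \cref{eq.sbe}.

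I expect the main obstacle to be the rigorous justification of the decoupling in the third step: I must argue that the continuation policy may be optimized independently of the first action (so the inner minimization over tail policies can be exchanged with the outer minimization over $a$), and I must handle attainment of the infima (or replace each $\min$ by $\inf$ and pass to the limit, using that $\SPACE{S},\SPACE{A}$ are bounded). The scalar identity of the second step and the time-shift re-indexing are routine once this principle-of-optimality decomposition is secured.
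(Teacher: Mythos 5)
Your proposal is correct and takes essentially the same approach as the paper: the paper only proves the self-consistency condition (\cref{theo.consistency}), via the identical split of the running maximum at $t=0$ followed by the time-shift re-indexing, and simply asserts that the SBE ``can be proven similarly.'' Your two extra ingredients---the exchange $\min_{\pi}\max\{A,B(\pi)\}=\max\{A,\min_{\pi}B(\pi)\}$ for policy-independent $A$, and the principle-of-optimality decoupling of the first action from the continuation policy---are precisely what that ``similarly'' leaves implicit; the one caution on the decoupling is that with the paper's stationary deterministic policies the continuation cannot be chosen fully independently of $\pi(s)$ when the trajectory revisits $s$, a wrinkle that is absorbed by the outer $\max\{h(s),\cdot\}$ (or sidestepped by minimizing over history-dependent policies and noting the optimal values coincide).
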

We extend \cref{theo.sbe} to a general form applicable to any policy, which is called the self-consistency condition:
\begin{theorem}[Self-consistency condition of the safety value function]
\label{theo.consistency}
\begin{equation}
\label{eq.self-con-condition}
V^{\pi}_{h}(s) = \max\left\{h(s), V^{\pi}_{h}(s')\right\}
\end{equation}
holds for $\forall s \in \SPACE{S}$ and $\forall \pi$, where $s'$ is the successive state at state $s$ following $\pi$.
\end{theorem}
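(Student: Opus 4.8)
The plan is to prove the identity directly from the definition $V^{\pi}_{h}(s) = \max_{t\in\mathbb{N}} h(s_t^\pi \mid s_0 = s)$ by exploiting the time-shift structure of the trajectories generated by a stationary policy under deterministic dynamics. The crucial observation is that the successive state $s'$ is precisely the second element of the trajectory launched from $s$, i.e.\ $s' = s_1^\pi$, so the trajectory launched from $s'$ coincides with the one-step tail of the trajectory launched from $s$. Everything then reduces to separating the $t=0$ term of a maximum from its tail.

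First I would make the time shift precise. Because the transition $P$ is deterministic and the policy $\pi: \SPACE{S}\mapsto\SPACE{A}$ is stationary, the state reached after $t$ steps from $s'$ equals the state reached after $t+1$ steps from $s$; formally $(s_t^\pi \mid s_0 = s') = (s_{t+1}^\pi \mid s_0 = s)$ for every $t\in\mathbb{N}$. Applying $h$ and taking the maximum over $t$ then gives
\[V^\pi_h(s') = \max_{t\in\mathbb{N}} h(s_t^\pi \mid s_0 = s') = \max_{t\ge 1} h(s_t^\pi \mid s_0 = s).\]
Next I would split the maximum defining $V^\pi_h(s)$ at the index $t=0$:
\[V^\pi_h(s) = \max_{t\ge 0} h(s_t^\pi \mid s_0 = s) = \max\left\{ h(s_0^\pi \mid s_0 = s),\; \max_{t\ge 1} h(s_t^\pi \mid s_0 = s) \right\}.\]
Since $s_0^\pi = s$, the first term equals $h(s)$, and by the previous display the second term equals $V^\pi_h(s')$, which yields $V^\pi_h(s) = \max\{h(s), V^\pi_h(s')\}$. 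The argument is uniform in both $s$ and $\pi$, delivering the stated ``for all $s\in\SPACE{S}$ and $\pi$'' conclusion.

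Rather than a deep obstacle, the one point requiring genuine care is the well-definedness of the maximum over the infinite index set $\mathbb{N}$. I would invoke the boundedness of $\SPACE{S}$ (hence of $h$) assumed in the notation so that the supremum is finite, and then either argue the maximum is attained or phrase the splitting identity $\max_{t\ge 0} a_t = \max\{a_0, \max_{t\ge 1} a_t\}$ in terms of suprema, for which it holds unconditionally. The only remaining subtlety is the index bookkeeping in the shift, ensuring that no term is double-counted or dropped when the $t=0$ contribution is peeled off from the tail; the deterministic, stationary structure is exactly what guarantees the tail from $s$ and the full trajectory from $s'$ agree term by term.
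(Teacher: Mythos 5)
Your proof is correct and follows essentially the same route as the paper's: peel off the $t=0$ term from the maximum and identify the tail $\max_{t\ge 1} h(s_t^\pi \mid s_0=s)$ with $V_h^\pi(s')$ via the time-shift property of trajectories under a stationary policy and deterministic dynamics. Your extra attention to the well-definedness of the maximum over $\mathbb{N}$ (phrasing the splitting identity via suprema) is a detail the paper glosses over, but it does not change the argument.
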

\vspace{-0.15in}
\begin{proof}
See \cref{proof.sbe}.
\end{proof}

\begin{remark}[Optimality]
The largest feasible set can be obtained by solving SBE \eqref{eq.sbe} for the optimal safety value function. However, this will lead to a policy always pursuing the lowest constraint violation, i.e., a purely safe policy. The purely safe policy does not tackle the optimality specifications. For example, a robotic arm should catch the objects as \emph{quickly} as possible with only \emph{bounded} torques. We do not need to choose the safest action at states which are interior points of the feasible set (non-safety-critical) \cite{salar2021projectionrl}. Intuitively, the safest action has to be taken only at states on the boundary of the feasible set (safety-critical states).
To address this issue, some studies design the switching rules between the purely safe policy and optimal policy \cite{chen2021ego}. In contrast, this paper learns only one policy tackling safety and optimality simultaneously. \cref{theo.consistency} could compute the safety value for this unified policy.
\end{remark}

\begin{remark}[Scalability]
\label{remark.scalability}
Besides characterizing the persistent safety of agents, reachability constraints are also scalable to constraints on cumulative quantities (i.e., safe budget) similar to conventional CRL, enabling it to be a general constraint formulation. A safe budget for the whole trajectory can be seen as the remaining budget constraint on any state during the trajectory. Let the budget constraint be $\sum_t{c(s_{\rm t}|s)}-\eta(s)\le0$, where $c(s_t)$ is the consumption of one step and $\eta(s)$ is the remaining budget at state $s$. We define $h(s)=-\eta(s)$ and thus $V_h^\pi(s)$ equals the worst-case over-budget whose being greater than 0 is unacceptable.
\end{remark}


\section{Reachability Constrained Reinforcement Learning}

In this section, we formally propose the novel RCRL problem which guarantees the persistent safety of the policy. Furthermore, if the initialization of the state covers the largest feasible set, the feasible set solved by RCRL equals the largest one defined in \cref{def.feasible_set}. Then the RCRL algorithm with a convergence guarantee will be devised.

\subsection{Problem Statement}
Given an MDP defined in \cref{sec.notation} and an initial state distribution $d_0$, RCRL aims to find the optimal policy $\pi^*$ to the following optimization problem:
\begin{equation}
\tag{RCRL}
\begin{aligned}
\label{eq.rcrl_problem}
\max_{\pi} &\quad\mathbb{E}_{s\sim d_0(s)} [V^{\pi}(s) \cdot \mathbbm{1}_{s\in \SPACE{S}_f} - V_h^{\pi}(s) \cdot \mathbbm{1}_{s\notin \SPACE{S}_f}] \\
\mbox{subject to} &\quad  V^{\pi}_h(s) \le 0, \forall s \in \SPACE{S}_f \cap \SPACE{S}_0,
\end{aligned}
\end{equation}
where $\mathbbm{1}_{A}=1$ holds when the event $A$ is \texttt{true} and otherwise $\mathbbm{1}_{A}=0$. Intuitively, \textbf{for initial states inside the largest feasible set}, i.e. $s\in \SPACE{S}_0\cap\SPACE{S}_f$, \eqref{eq.rcrl_problem} aims to maximize the expected return and ensure the persistent safety when following this policy. However, \textbf{for initial states outside the largest feasible set}, i.e., $s\in \SPACE{S}_0\setminus\SPACE{S}_f \triangleq \{ s \mid s \in \SPACE{S}_0, s \notin \SPACE{S}_f\}$, the state constraint $h(s)\le0$ will be violated sooner or later and it is impossible to satisfy the reachability constraint. Thus, it is meaningless to optimize the return of these infeasible states, and we only try to find the safest actions by minimizing the safety value functions.

The formulation in \eqref{eq.rcrl_problem} is different from the common CRL formulations in \cite{achiam2017cpo, tessler2018rcpo, Yang2020pcpo} where the constraint is imposed on the expectation of cost return:
\begin{equation}
\begin{aligned}
\label{eq.crl_problem}
\max_{\pi} & \quad \mathbb{E}_{s\sim d_0(s)} [V^{\pi}(s)] \\
\mbox{subject to} & \quad  \mathbb{E}_{s\sim d_0(s)}[V^{\pi}_c(s)] \le \eta,
\end{aligned}
\end{equation}
where $\eta$ is the cost threshold, $V_c^\pi$ is the cost value function. However, as stated in Section \ref{sec.related_work}, choosing $\eta$ is tricky
and it is hard to migrate the expectation-based CRL formulation to CRL problems with state constraints, such as \eqref{eq.rcrl_problem}.

Specially, if the the initial states cover the largest feasible set, the solution to \eqref{eq.rcrl_problem} also has the largest feasible set, which is given by the following proposition:
\begin{proposition}[The largest feasible set]
\label{theo.equivalent_region}
Assuming $\SPACE{S}_0 \cap \SPACE{S}_f \ne \emptyset$, for any feasible $\pi$ of problem \eqref{eq.rcrl_problem}, we have $\SPACE{S}_f^{\pi}=\SPACE{S}_f$ if $\SPACE{S}_f \subseteq \SPACE{S}_0$.
\end{proposition}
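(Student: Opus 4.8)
The plan is to establish two inclusions, $\SPACE{S}_f \subseteq \SPACE{S}_f^{\pi}$ and $\SPACE{S}_f^{\pi} \subseteq \SPACE{S}_f$, and conclude equality. The second inclusion is already granted in the text: $\SPACE{S}_f^{\pi} \subseteq \SPACE{S}_f$ holds for every policy $\pi$ directly from \cref{def.feasible_set}, since any state from which $\pi$ stays safe forever is in particular a state from which \emph{some} policy stays safe forever. Hence the entire content of the proof is the forward inclusion, and the hypothesis $\SPACE{S}_f \subseteq \SPACE{S}_0$ is precisely what forces it.

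First I would use the hypothesis to simplify the feasibility constraint. Because $\SPACE{S}_f \subseteq \SPACE{S}_0$, we have $\SPACE{S}_f \cap \SPACE{S}_0 = \SPACE{S}_f$, so the constraint of \eqref{eq.rcrl_problem} that any feasible $\pi$ must satisfy reduces to
\[
V_h^{\pi}(s) \le 0, \quad \forall s \in \SPACE{S}_f.
\]
The standing assumption $\SPACE{S}_0 \cap \SPACE{S}_f \ne \emptyset$ merely guarantees this constraint set is nonempty, so that the notion of a feasible $\pi$ is not vacuous.

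Next I would translate this constraint into a set inclusion through the sub-zero level set characterization $\SPACE{S}_f^{\pi} = \{s \mid V_h^{\pi}(s) \le 0\}$ stated following \cref{def.safety_value_func}. The displayed constraint says exactly that every $s \in \SPACE{S}_f$ satisfies $V_h^{\pi}(s) \le 0$, i.e., every such $s$ lies in $\SPACE{S}_f^{\pi}$. This is the forward inclusion $\SPACE{S}_f \subseteq \SPACE{S}_f^{\pi}$, and combining it with the always-true reverse inclusion yields $\SPACE{S}_f^{\pi} = \SPACE{S}_f$.

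Since the argument is a one-line consequence of the definitions once the level-set characterization is in hand, I do not anticipate a genuine technical obstacle; the work is conceptual rather than computational. The only point demanding care is invoking the equivalence ``$V_h^{\pi}(s) \le 0 \iff s \in \SPACE{S}_f^{\pi}$'' in the correct direction, so that the constraint supplies the nontrivial inclusion while the definitional inclusion supplies the other half. I would also explicitly flag the role of the hypothesis $\SPACE{S}_f \subseteq \SPACE{S}_0$: without it, the constraint only controls $V_h^{\pi}$ on $\SPACE{S}_f \cap \SPACE{S}_0$, so one could conclude merely $\SPACE{S}_f \cap \SPACE{S}_0 \subseteq \SPACE{S}_f^{\pi}$, and $\SPACE{S}_f^{\pi}$ could be strictly smaller than $\SPACE{S}_f$ on the portion of the largest feasible set that the initial distribution never reaches.
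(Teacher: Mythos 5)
Your proof is correct and follows essentially the same route as the paper's: both reduce the constraint via $\SPACE{S}_f \cap \SPACE{S}_0 = \SPACE{S}_f$ under the hypothesis $\SPACE{S}_f \subseteq \SPACE{S}_0$, read off the inclusion $\SPACE{S}_f \subseteq \SPACE{S}_f^{\pi}$ from the sub-zero level-set characterization of $V_h^{\pi}$, and combine it with the trivial reverse inclusion. Your added remark on why the hypothesis is indispensable is a nice clarification but does not change the argument.
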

\begin{proof}
See \cref{proof.region}.
\end{proof}

Overall, \eqref{eq.rcrl_problem} has three significant advantages: (1) Compared to conventional CRL approaches, \eqref{eq.rcrl_problem} considers the vital persistent safety of the system because every single time step in the future is guaranteed to be safe when reachability constraints are satisfied. (2) Compared to HJ reachability analysis studies, RCRL considers performance optimality besides safety. (3) Compared to other RL methods with feasible sets, any feasible policy of \eqref{eq.rcrl_problem} renders the largest feasible set if the initial states cover the feasible sets. The largest feasible sets brings less conservativeness and better performance because the policy could drive the system towards states with higher return.

\subsection{Lagrangian-based Algorithm with Statewise Constraints}
\label{sec.algo}
We leverage the Lagrange multiplier method to solve problem \eqref{eq.rcrl_problem}, which is a common approach to CRL \cite{chow2017cvar,Achiam2019safetygym,tessler2018rcpo}. The key idea of Lagrangian-based methods is to descend in $\pi$ and ascend in the multiplier $\lambda$ using the gradients of the Lagrangian $\mathcal{L}(\pi, \lambda)$ w.r.t. $\pi$ and $\lambda$ and to finally reach the optimal policy which satisfies the constraints.
Notably, constraints in \eqref{eq.rcrl_problem} are imposed on \emph{each state} in $\SPACE{S}_0\cap\SPACE{S}_f$, which is significantly different from typical ones on only the expectation among the states in \cite{achiam2017cpo, chow2017cvar, tessler2018rcpo}. We call these type of constraints as \emph{statewise constraints}. In this case, the multiplier is not longer a scalar but a vector (infinite-dimension in infinite states cases). Some preliminary studies discuss statewise constraints on the density of the state distribution \cite{chen2019duality,qin2021dcrl}. A more general Lagrangian-based solution for statewise constraints with \emph{an approximation to the multipliers} is discussed recently in \cite{ma2021fac,ma2021joint}. Without loss of generality, we denote the statewise multiplier as a function $\lambda: \SPACE{S}\mapsto[0,+\infty) \cup \{+\infty\}$. Then the Lagrangian of \eqref{eq.rcrl_problem} can be formulated as:
\begin{equation}
\begin{aligned}
\label{eq.lagrangian}
\mathcal{L}(\pi, \lambda) = & \mathbb{E}_{s\sim d_0}\left[-V^\pi(s)\cdot\mathbbm{1}_{s\in \SPACE{S}_f} + V_h^\pi(s)\cdot\mathbbm{1}_{s\notin \SPACE{S}_f}\right] \\
&+ \int_{\SPACE{S}_f \cap \SPACE{S}_0} {\lambda(s){V^\pi_h(s)} {\rm d} s}
\end{aligned}
\end{equation}
%
The most significant issue when we are tackling \eqref{eq.lagrangian} is that we cannot obtain the largest feasible set $\SPACE{S}_f$ in advance, which means $\SPACE{S}_f \cap \SPACE{S}_0$ is unknown. However, the initial distribution $d_0$ is usually accessible and we propose a surrogate Lagrangian in the form of expectation w.r.t. $d_0$ instead:
\begin{equation}
\label{eq.surrogate_lag}
\hat{\mathcal{L}}(\pi, \lambda) = \mathbb{E}_{s\sim d_0}[-V^\pi(s) + \lambda(s){V^\pi_h(s)}]
\end{equation}
A common choice of the initial distribution is the uniform distribution in the safe set $\SPACE{S}_c$ which covers the largest feasible set. However, it is inevitable that there are initial states outside $\SPACE{S}_f$. The constraint $V_h^{\pi}(s)\le0$ can never be satisfied for those states outside $\SPACE{S}_f$. Thus, each corresponding multiplier $\lambda(s)$ will go to $+\infty$ because $\lambda(s)$ tries to maximize the product of itself and a positive scalar $V_h^\pi(s)$, resulting in the divergence of \eqref{eq.surrogate_lag}. We can set a large upper bound $\lambda_{\rm max}$ to the statewise multiplier to avoid the divergence. We claim that when $\lambda_{\rm max}\rightarrow + \infty$, solving the surrogate Lagrangian is equivalent to solving the original one:
\begin{proposition}[Equivalent Lagrangian]
\label{theo.equivalent_lag}
Assume both \eqref{eq.lagrangian} and \eqref{eq.surrogate_lag} have the unique optimal solution. If we denote $\pi^*=\arg\min_{\pi}\max_{\lambda}{\mathcal{L}}$, $\hat{\pi}^*=\arg\min_{\pi}\max_{\lambda}\hat{\mathcal{L}}$, we have $\lim_{\lambda_{\rm max}\to+\infty}\hat{\pi}^*=\pi^*$.
\end{proposition}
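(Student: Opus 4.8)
The plan is to carry out the inner maximization over $\lambda$ in both min–max problems explicitly, which turns each into a penalty-form minimization over $\pi$, and then to invoke a penalty-method limiting argument. First I would reduce the original Lagrangian. Since $\lambda(s)$ ranges over $[0,+\infty)\cup\{+\infty\}$, for any $s\in\SPACE{S}_f\cap\SPACE{S}_0$ the term $\lambda(s)V_h^\pi(s)$ equals $+\infty$ whenever $V_h^\pi(s)>0$ and is maximized at $\lambda(s)=0$ with value $0$ whenever $V_h^\pi(s)\le0$. Hence $\min_\pi\max_\lambda\mathcal{L}$ restricts the minimization to the feasible policy class $\Pi_{\rm f}\triangleq\{\pi\mid V_h^\pi(s)\le0,\ \forall s\in\SPACE{S}_f\cap\SPACE{S}_0\}$ and reduces to minimizing $J(\pi)\triangleq\mathbb{E}_{s\sim d_0}[-V^\pi(s)\mathbbm{1}_{s\in\SPACE{S}_f}+V_h^\pi(s)\mathbbm{1}_{s\notin\SPACE{S}_f}]$ over $\Pi_{\rm f}$. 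For the surrogate $\hat{\mathcal{L}}$ with the bounded multiplier $\lambda(s)\in[0,\lambda_{\rm max}]$, the pointwise inner maximum is attained at $\lambda(s)=\lambda_{\rm max}$ when $V_h^\pi(s)>0$ and at $\lambda(s)=0$ otherwise, giving the exact-penalty objective $\hat J_{\lambda_{\rm max}}(\pi)\triangleq\mathbb{E}_{s\sim d_0}[-V^\pi(s)+\lambda_{\rm max}(V_h^\pi(s))_+]$, where $(x)_+\triangleq\max\{x,0\}$.

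Next I would split $\SPACE{S}_0=(\SPACE{S}_0\cap\SPACE{S}_f)\cup(\SPACE{S}_0\setminus\SPACE{S}_f)$ and examine $\hat J_{\lambda_{\rm max}}$ piece by piece. On infeasible initial states $s\in\SPACE{S}_0\setminus\SPACE{S}_f$, \cref{def.feasible_set} gives $V_h^\pi(s)>0$ for every $\pi$, so $(V_h^\pi(s))_+=V_h^\pi(s)$ and the integrand is $-V^\pi(s)+\lambda_{\rm max}V_h^\pi(s)$; because $V^\pi$ is bounded while the penalty coefficient diverges, the minimizer must in the limit minimize $V_h^\pi(s)$, which is exactly the term $+V_h^\pi(s)$ that $J$ carries on $\SPACE{S}_0\setminus\SPACE{S}_f$. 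On feasible initial states $s\in\SPACE{S}_0\cap\SPACE{S}_f$, the constraint $V_h^\pi(s)\le0$ is attainable because $V_h^*(s)\le0$ there, so the standard exact-penalty property forces $(V_h^{\hat\pi^*}(s))_+\to0$ as $\lambda_{\rm max}\to+\infty$; once the violation vanishes the penalty contributes nothing and the residual objective is $-V^\pi(s)$, matching $J$. Thus $\hat J_{\lambda_{\rm max}}$ converges to the feasibility-restricted original objective on each piece.

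Finally I would upgrade this objective-level convergence to convergence of the minimizers, following the classical penalty-method argument. Using boundedness of $\SPACE{S}$ and $\SPACE{A}$ (hence compactness of the policy class) and continuity of $V^\pi,V_h^\pi$ in $\pi$, extract any convergent subsequence of $\{\hat\pi^*_{\lambda_{\rm max}}\}$ with limit $\bar\pi$. A feasibility argument shows $\bar\pi\in\Pi_{\rm f}$: otherwise the diverging penalty would make $\hat J_{\lambda_{\rm max}}(\hat\pi^*_{\lambda_{\rm max}})$ eventually exceed $\hat J_{\lambda_{\rm max}}(\pi)$ for any fixed feasible $\pi$, contradicting optimality. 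An $\limsup$/$\liminf$ sandwich then gives $J(\bar\pi)\le J(\pi^*)$, so $\bar\pi$ is optimal for the original problem, and the assumed uniqueness of $\pi^*$ forces $\bar\pi=\pi^*$. Since every convergent subsequence has the same limit on the compact policy class, the whole family converges, i.e.\ $\lim_{\lambda_{\rm max}\to+\infty}\hat\pi^*=\pi^*$.

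The step I expect to be the main obstacle is this last passage from value convergence to $\arg\min$ convergence. The penalty coefficient diverges only on the infeasible region, so I must ensure that this blow-up does not corrupt the feasible-region optimization through the \emph{shared} policy, and that the two different reference measures in $\mathcal{L}$ (Lebesgue on $\SPACE{S}_f\cap\SPACE{S}_0$) and $\hat{\mathcal{L}}$ ($d_0$ on $\SPACE{S}_0$) induce the same set of active constraints. This is precisely where the assumption that $d_0$ is positive on $\SPACE{S}_c\supseteq\SPACE{S}_f$, together with the assumed uniqueness of both optima, is essential.
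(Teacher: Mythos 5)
Your skeleton is the same as the paper's: carry out the inner maximization over $\lambda$ (saturation at $\lambda_{\rm max}$ wherever $V_h^\pi(s)>0$, at $0$ elsewhere), split $\SPACE{S}_0$ into its feasible and infeasible parts, and let $\lambda_{\rm max}\to+\infty$. You actually go further than the paper in two useful ways: you explicitly reduce the original Lagrangian \eqref{eq.lagrangian} to the hard-constrained problem, whereas the paper implicitly identifies $\pi^*$ with the solution of \eqref{eq.rcrl_problem}; and you attempt a genuine convergence-of-minimizers argument, whereas the paper stops at an objective-level identification (``this is exactly what \eqref{eq.rcrl_problem} does'') and never explains why the $\arg\min$'s converge.

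However, the penalty-method closing as you describe it fails at a concrete step. The classical exact-penalty sandwich requires the reference point (here $\pi^*$) to have removable penalty, so that $\hat{\mathcal{L}}$ evaluated there stays bounded as $\lambda_{\rm max}$ grows. That is false here: \emph{every} policy, including $\pi^*$, has $V_h^\pi(s)>0$ on $\SPACE{S}_0\setminus\SPACE{S}_f$, and $d_0$ charges that set, so $\hat J_{\lambda_{\rm max}}(\pi^*)\to+\infty$ and both sides of your comparison diverge at rate $\lambda_{\rm max}$. (For the same reason your intermediate claim that ``$\hat J_{\lambda_{\rm max}}$ converges to the feasibility-restricted original objective on each piece'' cannot be literally true.) To extract anything you must first compare the coefficients of $\lambda_{\rm max}$, concluding that any limit point $\bar\pi$ minimizes the total penalty $\mathbb{E}_{s\sim d_0}\left[\max\{V_h^\pi(s),0\}\right]$, and only then compare the bounded remainders. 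This two-stage argument recovers the proposition only if there is no trade-off between the two regions, i.e., only if some single policy is simultaneously feasible on $\SPACE{S}_f\cap\SPACE{S}_0$ and minimal for $\mathbb{E}\left[V_h^\pi(s)\mathbbm{1}_{s\notin\SPACE{S}_f}\right]$. Otherwise $\hat\pi^*$ could profitably trade feasible-region violations against infeasible-region safety value, and the limit would solve a lexicographic problem that weights the infeasible region infinitely more heavily than \eqref{eq.rcrl_problem} does (which weights it by $1$). That decoupling is precisely what the paper disposes of in one sentence (``the results $\pi(s)$ of states inside and outside $\SPACE{S}_f$ are independent'') — itself delicate, since trajectories started outside $\SPACE{S}_f$ can re-enter $\SPACE{S}_f$ after a violation, coupling the two parts through the shared policy. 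You honestly flag this as your main obstacle, but flagging is not closing: without a proof of the decoupling, both your feasibility-of-the-limit step and your sandwich break, and this is the one idea your proposal is missing relative to a complete argument (the paper's own proof asserts it rather than proves it).
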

\begin{proof}
See \cref{proof.equivalent_lag}.
\end{proof}

\begin{remark}
The surrogate Lagrangian can be regarded as an expected weighted sum of $-V^\pi(s)$ and $V^\pi_h(s)$ using weights $\lambda(s)$. We separate the surrogate Lagrangian into two parts here, the expectations on feasible and infeasible initial states. For those feasible initial states, $\lambda(s)$ is finite and the surrogate Lagrangian calculates the expected weighted sums of $-V^\pi(s)$ and $V^\pi_h(s)$. For those infeasible initial states, $\lambda(s)\to +\infty$, so $-V^\pi(s)$ is ignored in the weighted sums and the surrogate Lagrangian is only dominated by the expected $V^\pi_h(s)$.
\end{remark}

Hence, solving problem \eqref{eq.rcrl_problem} can be approximated by finding the saddle point of the surrogate \eqref{eq.surrogate_lag}:
\begin{equation}
\label{eq.rcrl_lag}
\min_{\pi}\max_{\lambda}\hat{\mathcal{L}}(\pi, \lambda).
\end{equation}

Consider the common actor-critic framework with state-action value functions. We have the state-action (safety) value $Q(s,\pi(s))=V^\pi(s)$ and $Q_h(s,\pi(s))=V_h^\pi(s)$ due to the deterministic dynamics and policy. We also adopt parameterized Q-function $Q(s,a;\omega)$, safety Q-value function $Q_h(s,a;\phi)$, a policy $\pi(s;\theta)$, and the statewise multiplier $\lambda(s;\xi)$. The Lagrangian thus becomes $\hat{\mathcal{L}}(\theta, \xi)$. Note that sometimes we use $Q_\omega, \pi_\theta, \lambda_\xi$ for short. Now we derive the objectives of the parameterized functions.

\begin{algorithm}[tb]
   \caption{Template for actor-critic RCRL}
   \label{alg.rcrl}
\begin{algorithmic}
   \STATE {\bfseries Input:} MDP $M$ with constraint $h(\cdot)$, critic and safety value function learning rate $\beta_1(k)$,  actor learning rate $\beta_2(k)$, multiplier learning rate $\beta_3(k)$
   \STATE {\bfseries Initialization:} q-function parameters $\omega=\omega_0$, safety q-function parameters $\phi=\phi_0$, policy parameters $\theta=\theta_0$, multiplier parameters $\xi=\xi_0$
   \FOR{$k=0,1,\ldots$}
   \STATE Initialize state $s_0\sim d_0$.
   \FOR{$t=0$ {\bfseries to} $T-1$}
   \STATE Select action $a_t=\pi_\theta(s_t)$, observe next state $s_{t+1}$, reward $r_t$ and constraint $h(s_t)$
   \STATE {\bfseries Critic update} $\omega_{k+1}=\omega_k-\beta_1(k)\hat{\nabla}_{\omega}{\mathcal{J}_Q}(\omega)$
   \STATE {\bfseries Safety value update}\\\qquad $\phi_{k+1}=\phi_k-\beta_1(k)\hat{\nabla}_{\phi}{\mathcal{J}_{Q_h}}(\phi)$
   \STATE {\bfseries Actor update} $\theta_{k+1}=\Gamma_\Theta\left(\theta_k-\beta_2(k)\hat{\nabla}_{\theta}{\mathcal{J}_{\pi}}(\theta)\right)$
   \STATE {\bfseries Multiplier update}\\\qquad $\xi_{k+1}=\Gamma_\Xi\left(\xi_k+\beta_3(k)\hat{\nabla}_{\xi}{\mathcal{J}_{\lambda}}(\xi)\right)$
   \ENDFOR
   \ENDFOR
   \STATE {\bfseries return} parameters $\omega, \phi, \theta, \xi$
\end{algorithmic}
\end{algorithm}

The Q-value function update is the standard one in popular RL \cite{sutton2018rl}, which can be seen in \cref{sec.grad_derivation}. The safety Q-value function of the current policy is updated according to the self-consistency condition in \cref{theo.consistency}, i.e., minimizing the mean squared error:
\begin{equation}
\label{eq.safety_q_loss}
\mathcal{J}_{Q_h}(\phi) = \mathbb{E}_{s \sim \mathcal{D}} \left[ 1/2\left(Q_h(s,a;\phi)-\hat{Q}_h(s,a)\right)^2 \right],
\end{equation}
where
\begin{equation}
\begin{aligned}
\label{eq.q_h_target}
\hat{Q}_h(s,a)&=(1-\gamma)h(s) \\
&+ \gamma \mathbb{E}_{s'\sim P}[\max \{h(s), Q_h(s',\pi(s');\phi)\}],
\end{aligned}
\end{equation}
$\mathcal{D}$ is the distribution of previously sampled states and actions (i.e., $d_{\pi}$), or a replay buffer, and $a$ is the action taken at $s$. Note that the discounted version of self-consistency condition is for convergence in \cref{proof.convergence}, as in \cite{fisac2019hjrl}.

As aforementioned, the purpose of policy $\pi_\theta$ is to descend the Lagrangian while the multiplier tries to ascend it:
\begin{equation}
\label{eq.policy_loss}
\begin{aligned}
&\mathcal{J}_{\pi}(\theta) = \mathcal{J}_{\lambda}(\xi) \\
&= \mathbb{E}_{s\sim \mathcal{D}}[-Q(s,\pi_\theta(s);\omega) + \lambda_\xi(s){Q_h(s,\pi_\theta(s);\phi)}].
\end{aligned}
\end{equation}

\cref{alg.rcrl} provides the pseudo-code of an actor-critic version of RCRL. The algorithm alternates between interacting with the environment and updating the parameter vectors with stochastic gradients $\hat{\nabla}_{\omega}{\mathcal{J}_Q}(\omega)$, $\hat{\nabla}_{\theta}{\mathcal{J}_{\pi}}(\theta)$, $\hat{\nabla}_{\theta}{\mathcal{J}_{\pi}}(\theta)$, and $\hat{\nabla}_{\xi}{\mathcal{J}_{\lambda}}(\xi)$, whose derivation can be seen in \cref{sec.grad_derivation}. In the algorithm, the $\Gamma_\Psi(\psi)$ operator projects a vector $\psi\in\mathbb{R}^\kappa$ to the closet point in a compact and convex set $\Psi\subseteq\mathbb{R}^\kappa$, i.e., $\Gamma_\Psi(\psi)=\arg\min_{\hat{\psi}\in\Psi}\Vert \hat{\psi}-\psi\Vert ^2$ where $\psi$ is denoted as any one of $\theta, \xi$. These projection operators are necessary for the convergence of the actor-critic algorithm \cite{chow2017cvar}. A policy-gradient version of RCRL is designed similarly in \cref{alg:rco}.

\section{Convergence Analysis}
Under moderate assumptions, we can provide a convergence guarantee of \cref{alg.rcrl}. The convergence analysis follows heavily from the convergence proof of multi-time scale stochastic approximation algorithms \cite{chow2017cvar}. We also utilize theorems of combining the ODE (ordinary differential equation) viewpoint and stochastic approximation from \cite{borkar2009stochastic}. We first introduce the necessary assumptions.
\begin{assumption}[Finite MDP]
\label{assume.mdp}
The MDP is finite (finite state and action space, i.e., $\vert\SPACE{S}\vert<\infty, \vert\SPACE{A}\vert<\infty$), and $\SPACE{S}$ and $\SPACE{A}$ are both bounded. The first-hitting time of the MDP $T_{\pi, s}$ is bounded almost surely over all policy $\pi$ and all initial states $s\in\SPACE{S}_0$. We refer the upper bound as $T$. The reward function and constraint value of a single step are bounded by $r_{\rm{max}}$ and $h_{\rm max}$, respectively. Hence, the value function is upper bounded by $r_{\rm{max}}/(1-\gamma)$.
\end{assumption}

\begin{assumption}[Strict Feasibility]
\label{assume.feasibility}
There exists a policy $\pi(\cdot;\theta)$ such that $V^{\pi_\theta}_h(s)\le0,\forall s\in\SPACE{S}_0\cap\SPACE{S}_f\ne\emptyset$.
\end{assumption}

\begin{assumption}[Differentiability]
\label{assume.differentiability}
For any state-action pair $(s,a)$, $Q(s,a;\omega)$ and $Q_h(s,a;\phi)$ are continuously differentiable in $\omega$ and $\phi$, respectively. Moreover, $\nabla_a{Q(s,a;\omega)}$ and $\nabla_a{Q_h(s,a;\phi)}$ are Lipschitz functions in $a$, for $\forall s \in \SPACE{S}, \forall \omega \in \Omega$, and $\forall \phi \in \Phi$. For $\forall s\in\SPACE{S}, \forall a\in\SPACE{A}$, $\nabla_a{Q(s,a;\omega)}$ is a Lipschitz function in $\omega$ and $\nabla_a{Q_h(s,a;\phi)}$ is a Lipschitz function in $\phi$. For any state $s$, $\pi(s;\theta)$ is continuously differentiable in $\theta$ and $\nabla_\theta{\pi(s;\theta)}$ is a Lipschitz function in $\theta$. For any state $s$, $\lambda(s;\xi)$ is continuously differentiable in $\xi$ and $\nabla_\xi{\lambda(s;\xi)}$ is a Lipschitz function in $\xi$.
\end{assumption}

\begin{assumption}[Step Sizes]
\label{assume.step_size}
The step size schedules $\{\beta_1(k)\}, \{\beta_2(k)\}$, and $\{\beta_3(k)\}$ satisfy
\begin{equation}
\begin{aligned}
\nonumber
& \sum_k{\beta_1(k)}=\sum_k{\beta_2(k)}=\sum_k{\beta_3(k)}=\infty \\
& \sum_k{\beta_1(k)^2}, \sum_k{\beta_2(k)^2}, \sum_k{\beta_3(k)^2} < \infty \\
& \beta_3(k)=o\left(\beta_2(k)\right), \beta_2(k)=o\left(\beta_1(k)\right).
\end{aligned}
\end{equation}
\end{assumption}
These step-size schedules satisfy the standard conditions for stochastic approximation algorithms, and ensure that the critic update is on the fastest time scale $\{\beta_1(k)\}$, the policy update is on the intermediate time scale $\{\beta_2(k)\}$, and the multiplier is on the slowest one $\{\beta_3(k)\}$. 
Now we come to the position where the convergence of actor-critic RCRL can be provided.
\begin{theorem}
\label{theo.convergence}
Under \cref{assume.mdp} to \ref{assume.step_size}, the policy sequence updated in \cref{alg.rcrl} converges almost surely to a locally optimal policy for the reachability constrained policy optimization problem \eqref{eq.rcrl_problem}.
\end{theorem}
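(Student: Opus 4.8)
The plan is to cast \cref{alg.rcrl} as a three–time-scale stochastic approximation scheme and analyze each scale via the ODE method of \cite{borkar2009stochastic}, exactly in the spirit of the CVaR convergence proof of \cite{chow2017cvar}. \cref{assume.step_size} enforces the separation $\beta_3(k)=o(\beta_2(k))$, $\beta_2(k)=o(\beta_1(k))$, so the critics evolve on the fastest scale, the actor on an intermediate scale, and the multiplier on the slowest scale. The governing principle is that from the viewpoint of a faster iterate everything slower looks quasi-static, while from the viewpoint of a slower iterate everything faster has already equilibrated; I would formalize each scale by freezing the slower variables and invoking the fact that the faster variables track the asymptotically stable equilibrium of their mean-field ODE.

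First I would treat the fastest (critic) scale. Writing each critic recursion as $x_{k+1}=x_k+\beta_1(k)(g(x_k)+M_{k+1})$ with $M_{k+1}$ a martingale difference, I decompose the mean field into the standard Bellman operator for $Q$ and the discounted self-consistency operator $\mathcal{T}_h Q_h(s,a)=(1-\gamma)h(s)+\gamma\mathbb{E}_{s'}[\max\{h(s),Q_h(s',\pi_\theta(s'))\}]$ for $Q_h$, which decouple since each target depends only on its own value. The crucial sublemma is that $\mathcal{T}_h$ is a $\gamma$-contraction in the sup norm: because $|\max\{h,u\}-\max\{h,v\}|\le|u-v|$, the $\max$ is nonexpansive and the factor $\gamma$ supplies the contraction just as discounting does in the ordinary case. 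Hence, for frozen $(\theta,\xi)$, the critic ODE $\dot{x}=\mathcal{T}x-x$ has a unique globally asymptotically stable equilibrium, namely the true pair $(Q^{\pi_\theta},Q_h^{\pi_\theta})$, the latter being consistent with \cref{theo.consistency}. \cref{assume.mdp} (finite bounded MDP with bounded first-hitting time) bounds the iterates and the noise variance, so the hypotheses of \cite[Ch.~6]{borkar2009stochastic} hold and $x_k\to(Q^{\pi_{\theta_k}},Q_h^{\pi_{\theta_k}})$ almost surely.

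Next I would handle the intermediate (actor) and slowest (multiplier) scales. With the critics equilibrated and $\xi$ quasi-static, the actor performs projected stochastic gradient descent on $\hat{\mathcal{L}}(\cdot,\xi)$; by \cref{assume.differentiability} the gradient $\nabla_\theta\hat{\mathcal{L}}$ is Lipschitz, so the projected ODE $\dot\theta=\Gamma_\Theta(-\nabla_\theta\hat{\mathcal{L}}(\theta,\xi))$ is well posed and, using $\hat{\mathcal{L}}$ as a Lyapunov function, converges to the set of stationary points where the projected gradient vanishes, i.e. local minima of $\theta\mapsto\hat{\mathcal{L}}(\theta,\xi)$. On the slowest scale both critics and actor are equilibrated, so $\xi$ performs projected gradient ascent whose ODE $\dot\xi=\Gamma_\Xi(\nabla_\xi\hat{\mathcal{L}}(\theta^*(\xi),\xi))$ converges to a local maximizer; the compact convex projection sets $\Theta,\Xi$ (with $\lambda$ capped at $\lambda_{\rm max}$) keep both iterates bounded. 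Chaining the three limits through the multi-time-scale theorem yields almost sure convergence to a local saddle point of $\hat{\mathcal{L}}$, equivalently a KKT point of the surrogate problem \eqref{eq.rcrl_lag}. Finally I would invoke \cref{theo.equivalent_lag} to pass from the surrogate to the true Lagrangian as $\lambda_{\rm max}\to\infty$ and \cref{theo.equivalent_region} to identify the recovered feasible set, concluding that the limit is a locally optimal policy of \eqref{eq.rcrl_problem}.

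The main obstacle I anticipate is twofold and concentrated at the fastest scale and in the saddle-point identification. First, verifying the contraction together with the boundedness and martingale-noise conditions for the \emph{nonsmooth} operator $\mathcal{T}_h$ (the $\max$ renders it only directionally differentiable) requires care to remain within the hypotheses of \cite{borkar2009stochastic}, and the nonexpansiveness of $\max$ is the key enabling estimate. Second, because of the indicators $\mathbbm{1}_{s\in\SPACE{S}_f}$ in the objective and the statewise, infinite-dimensional multiplier, showing that the converged saddle point is a genuine \emph{local optimum} of the constrained problem rather than a spurious stationary point is delicate; this is precisely where \cref{assume.feasibility} (a Slater-type strict-feasibility condition) is needed to preclude degeneracy and to keep the multiplier finite on the feasible region.
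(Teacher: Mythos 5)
Your proposal follows essentially the same route as the paper's proof: the same three-time-scale decomposition with the critics fastest, actor intermediate, and multiplier slowest; the same key sublemma that the discounted self-consistency operator is a $\gamma$-contraction via nonexpansiveness of $\max$; the same projected-ODE/Lyapunov analysis yielding a local saddle point and then local optimality via the saddle point theorem. If anything, you are slightly more explicit than the paper in chaining through \cref{theo.equivalent_lag} to pass from the surrogate Lagrangian back to problem \eqref{eq.rcrl_problem}, a step the paper leaves implicit.
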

\begin{proof}
See \cref{proof.convergence}.
\end{proof}

The conditions for convergence may be strict and ideal such that we have to make some simplification and approximation to make the RCRL algorithm tractable and scalable for high-dimensional and continuous problems. We discuss the gap between the necessary assumptions and the practical situation in \cref{sec.gap}. More details about implementation can be found in \cref{sec.algo_details}. 

\section{Experiments}
We aim to answer the following through our experiments:
\begin{itemize}
\setlength{\itemsep}{0pt}
\setlength{\parsep}{0pt}
\setlength{\parskip}{0pt}
\setlength{\partopsep}{-0.1pt}
\setlength{\topsep}{-0.1pt}
    \item Can RCRL learn the largest feasible sets using neural networks approximation of safety value functions? 
    \item Does RCRL outperform CRL methods based on cost value constraints in terms of fewer violations?
    \item Does RCRL perform better than methods based on energy function with respect to performance optimality benefiting from the largest feasible set?
\end{itemize}

\textbf{Benchmarks.} We implement both on- and off-policy RCRL and compare them with different CRL baselines. Experiments include that: (1) use double-integrator \cite{fisac2019hjrl} which has an analytical solution to check the correctness of feasible set learned by RCRL; (2) validate the scalability of RCRL to nonlinear control problems, specifically, a 2D quadrotor trajectory tracking task in safe-control-gym \cite{yuan2021scg}, and (3) classical safe learning benchmark Safety-Gym \cite{Achiam2019safetygym}. Details about each benchmark will be introduced per subsection.

\textbf{Baseline Algorithms.} Details about algorithms can be seen in \cref{sec.algo_details}. Besides RCRL, we test following baselines: (1) \textbf{Lagrangian}-based algorithms whose constraint is about the discounted cumulative costs (an implementation of RCPO \cite{tessler2018rcpo}); (2) \textbf{Reward shaping} method with a fixed coefficient penalty added to the reward; (3) \textbf{CBF}-based algorithms whose constraint is about $B(s)\triangleq\dot{h}(s)+\mu h(s)\le0$ where $\mu\in(0,1)$ is a hyperparameter; and (4) \textbf{SI}-based methods that defines an SI $\varphi(s)=\sigma-(-h(s))^n + k\dot{h}(s)$ and sets constraints 
\begin{equation}
\label{eq.constraint_si}
    \varphi(s')-\max\{\varphi(x)-\eta_D,0\}\le0,
\end{equation}
where $\sigma, n, k, \eta_D$ are hyperparameters.

\subsection{Double Integrator: Comparison to Ground Truth}

We demonstrate that RCRL can learn the largest feasible sets when controlling the double integrator. The reason why we choose the double integrator is that it is a simple dynamical system where we can use numerical solution by the level set toolbox to obtain the ground truth about the largest feasible sets (also called \emph{HJ viability kernels}) \cite{mitchell2007toolbox}. Double integrator is a 2D dynamical system, where the system states and the dynamics are denoted as
\begin{equation}
    s=[x_1,x_2]^T,\quad \dot s = [x_2,a],
\end{equation}
where the control limits of action $a$ is $a\in[-0.5, 0.5]$. The safety constraint is $\|s\|_\infty\leq5$. 
The reward is designed as $r_t=\|s\|^2+a_t^2$.

\textbf{Baselines.} In addition to the ground truth using the level set toolbox \cite{mitchell2007toolbox}, we introduce two discrete approximations of feasible sets using model predictive control (MPC) utilizing CBF constraints and terminal constraints, respectively \cite{ma2021cbf, mayne2000constrained}. These two MPC baselines are named as MPC-CBF and MPC-Terminal, respectively.
\begin{figure}[ht]
    \vskip 0.1in
    \centering
    \includegraphics[width=0.75\columnwidth]{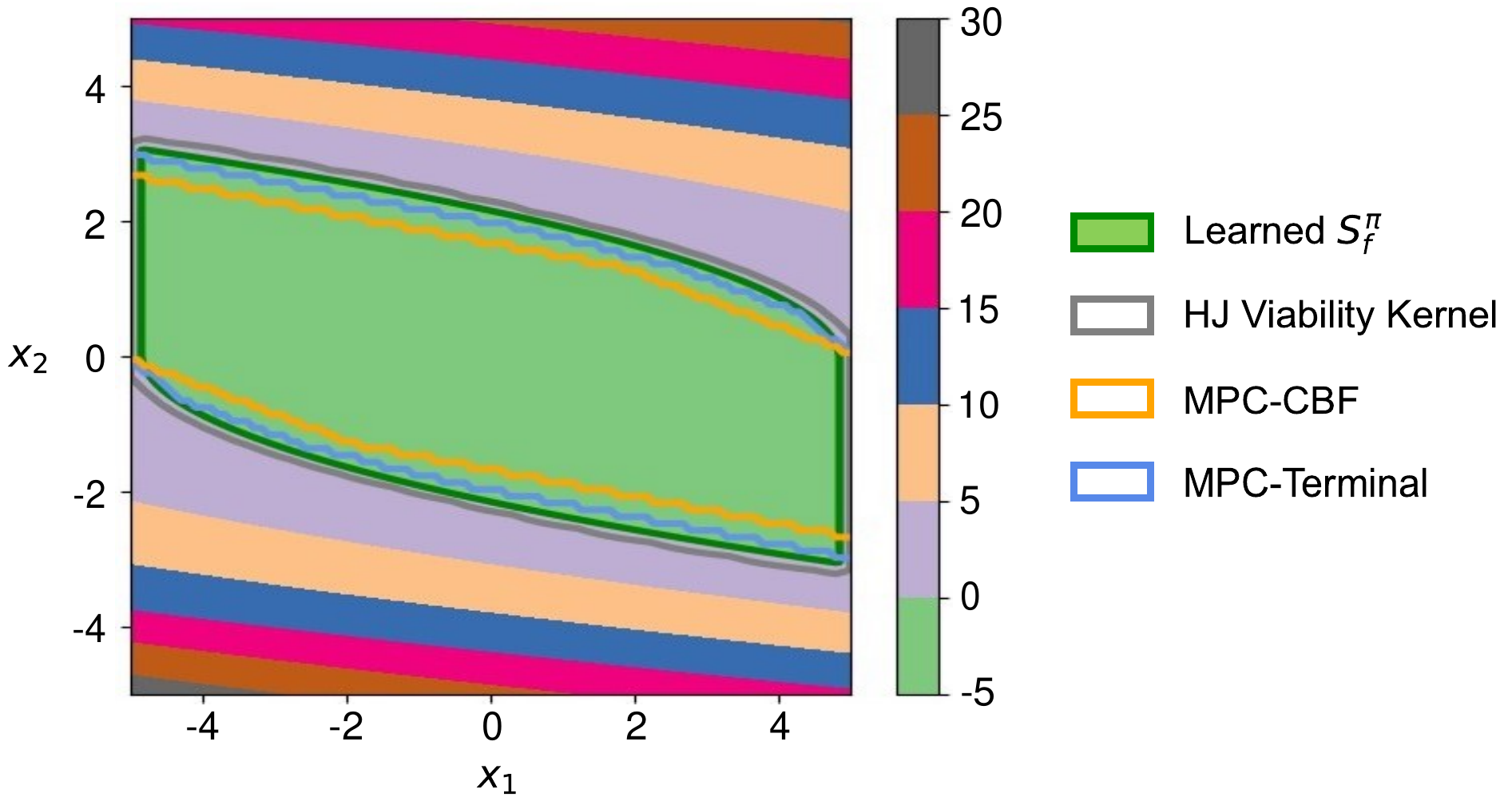}
    \caption{Learned $\mathcal{S}^\pi_f$ of the double integrator.}
    \label{fig:doubleint}
    \vskip -0.2in
\end{figure}

The learned result is shown in \cref{fig:doubleint}. It depicts that the learned $\mathcal{S}_f^\pi$ exactly approximate the largest feasible set or the HJ viability kernel given by the level set toolbox. MPC-Terminal also identifies the feasible sets with some discretization error. MPC-CBF has a smaller feasible set since the conservativeness of energy-based methods.

\subsection{Safe-control-gym: Quadrotor Trajectory Tracking}
The 2D quadrotor trajectory tracking task comes from safe-control-gym \cite{yuan2021scg} and is shown in \cref{fig.snapshot}, where the circle trajectory is marked as red and the constraint for the quadrotor is to keep itself between the two black lines. Schematics of the 2D quadrotor are shown in \cref{fig.quadrotor}, where $(x,z)$ and $(\dot{x},\dot{z})$ are the position and velocity of the COM (center of mass) of the quadrotor in the $xz$-plane, and $\theta$ and $\dot{\theta}$ are the pitch angle and pitch angle rate, respectively. The task for the quadrotor is to track the moving waypoint on the circle trajectory by controlling the normalized thrusts while maintaining its altitude $z$ between $[0.5,1.5]$, i.e.,
\begin{equation}
    \nonumber
    h(s)=\max\{0.5-s^{(2)}, s^{(2)}-1.5\}.
\end{equation}
Details about the state and action space, reward function can be seen in \cref{sec.scg_details}.

\begin{figure}[htb]
\begin{center}
    \subfigure[Snapshot of environment, where red line is the reference and black lines are constraint boundaries.]{
    \label{fig.snapshot} 
    \includegraphics[width=0.35\columnwidth]{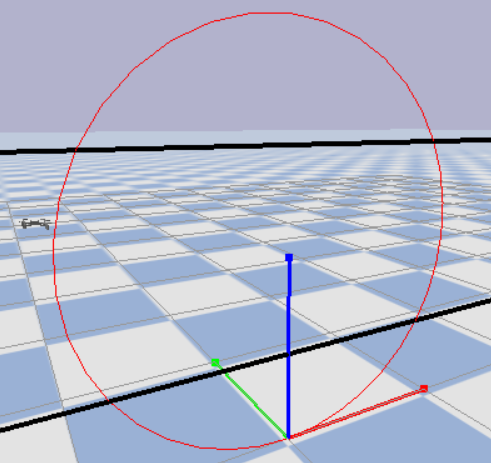}}
\hspace{0.1in}
    \subfigure[Schematics, state and input of the 2D quadrotor]{
    \label{fig.quadrotor} 
    \includegraphics[width=0.4\columnwidth]{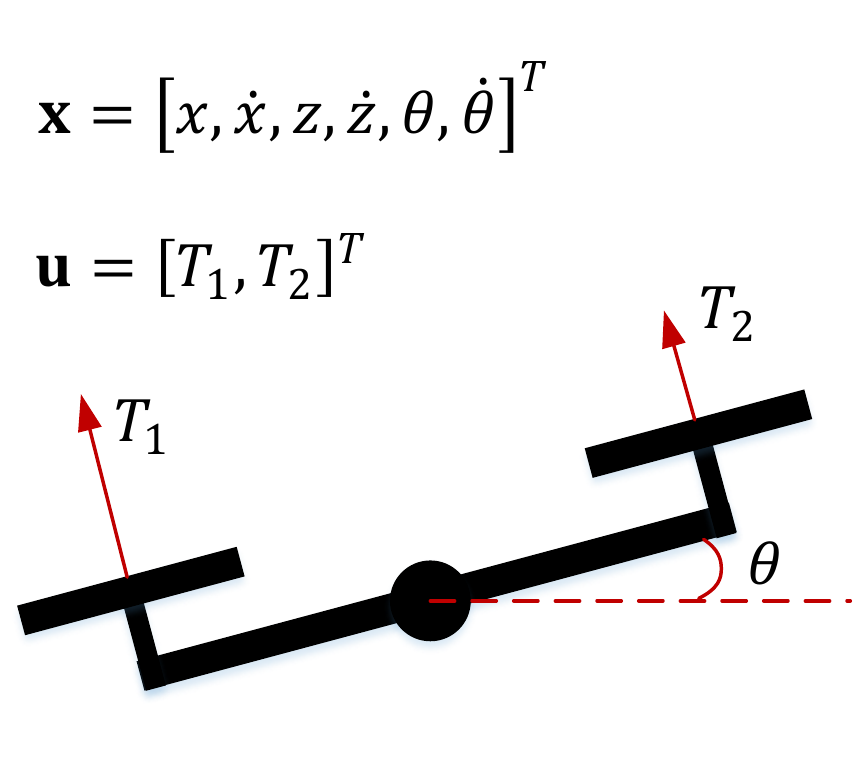}}
\caption{safe-control-gym environment}
\label{fig.scg} 
\end{center}
\vskip -0.1in
\end{figure}

\textbf{Baselines.} We implement an off-policy version of RCRL in safe-control-gym based on SAC \cite{haarnoja2018sac}, forming our Reachable Actor Critic (RAC). Other off-policy baselines are all implemented based on SAC for fairness, including: (1) \textbf{SAC-Lagrangian}, (2) \textbf{SAC-Reward Shaping}, (3) \textbf{SAC-CBF}, and (4) \textbf{SAC-SI}.

\begin{figure}[htb]
\begin{center}
    \subfigure{\includegraphics[width=0.48\columnwidth]{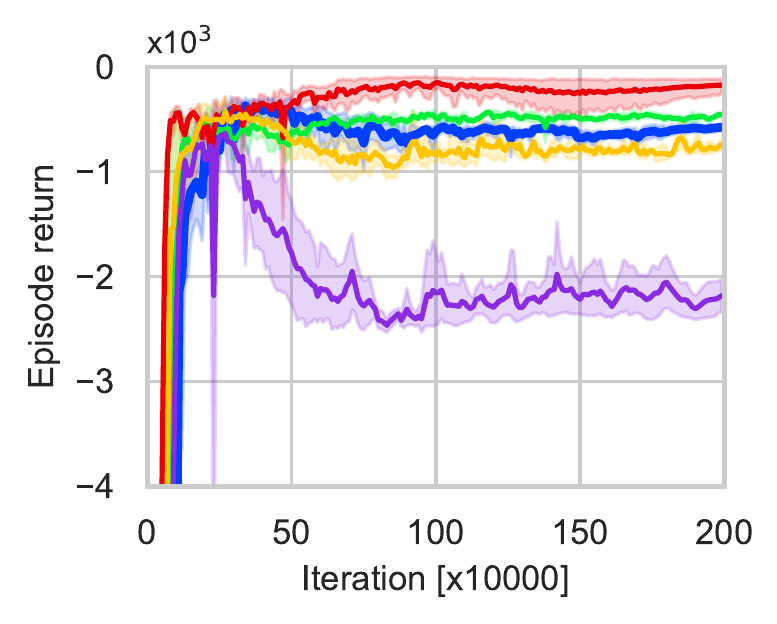}}
\hspace{0.05in}
    \subfigure{\includegraphics[width=0.48\columnwidth]{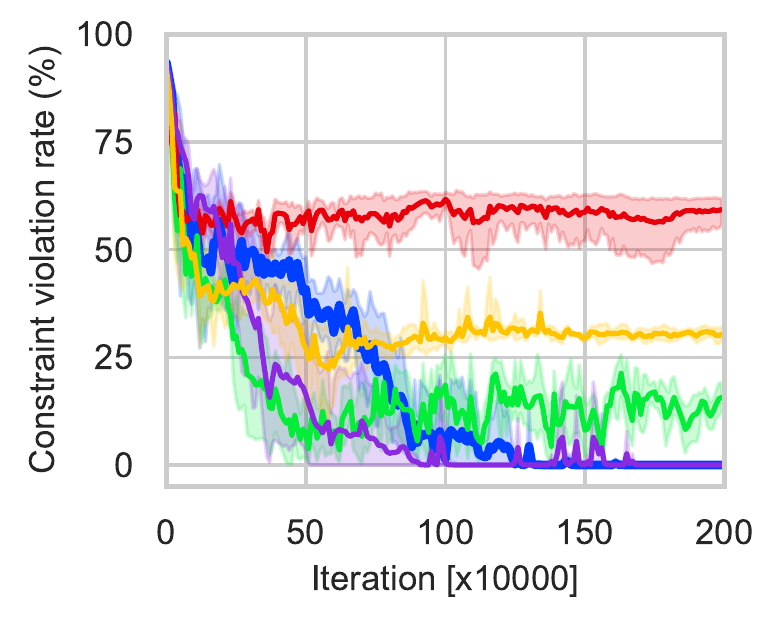}} \\
\vspace{-0.1in}
\subfigure{\includegraphics[width=0.98\columnwidth]{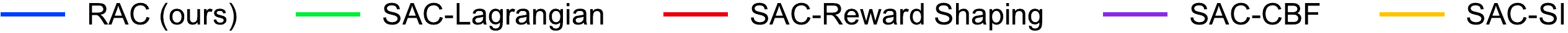}}
\caption{Performance of algorithms on safety-control-gym. The first two figures are training curves on the quadrotor trajectory tracking task. All results are averaged on 5 independent runs and the shaded regions are the 95\% confidence intervals.}
\label{fig.performance_scg} 
\end{center}
\vskip -0.1in
\end{figure}

\begin{figure}[!htb]
\vskip -3pt
\centering
\includegraphics[width=0.7\columnwidth]{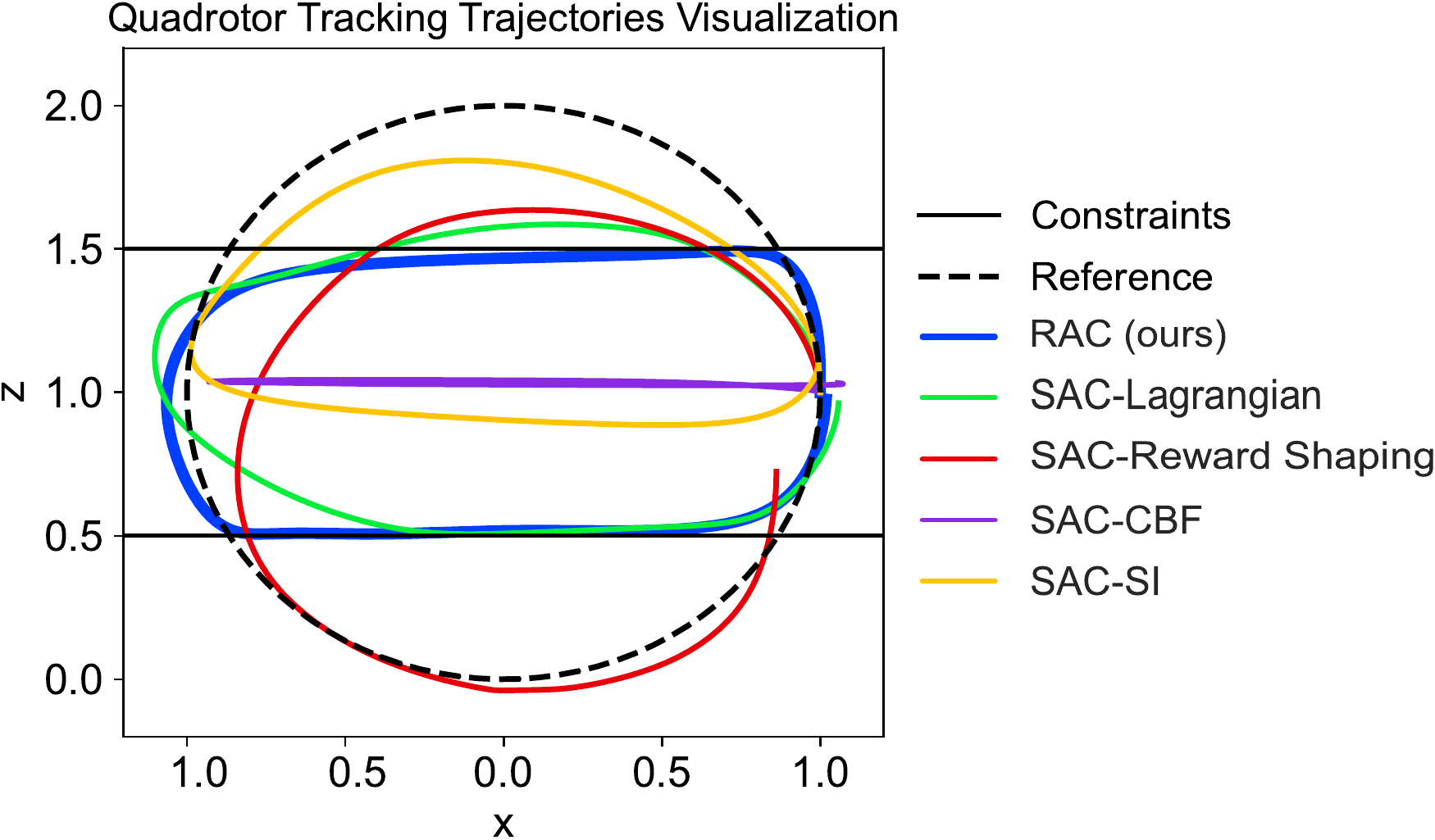}
\vspace{-0.1in}
\caption{Trajectories of final policies trained by different algorithms in a run.}
\label{fig.trajectory} 
\vskip -5pt
\end{figure}

\cref{fig.performance_scg} demonstrates performance with respect to the average return and constraint violation rate of the five algorithms. RAC (blue line) learns a zero-violation policy and reaches near-optimal tracking accuracy. In contrast, though SAC-CBF does not violate the constraint as well, the tracking error is quite large because it just moves horizontally due to a conservative policy. The constraints of SAC-SI require the SI to be below zero and to decrease when it is beyond zero, which explains that it makes the quadrotor fly beyond the upper bound and move horizontally in the safe set, corresponding to the feasible set in \cref{fig.si_fs}. The other two algorithms reach higher returns at the cost of unacceptable constraint violation. In this task, keeping $z$ between $[0.5, 1.5]$ will definitely bring tracking error, which leads to a lower return. Trajectories in \cref{fig.trajectory} indicate that RAC keeps the quadrotor in the safe set strictly and tracks the trajectory accurately inside the safe set while others fail to.

\begin{figure}[!htb]
\begin{center}
\subfigure[RAC safety value function slices]{\label{fig.rac_fs}\includegraphics[width=0.9\columnwidth]{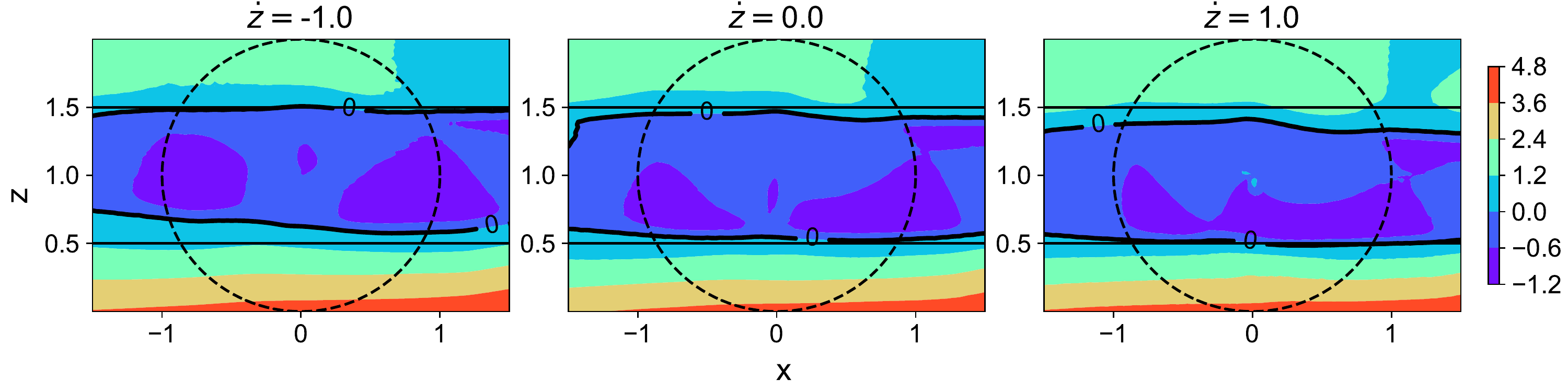}} \\
\vspace{-0.1in}
\subfigure[SAC-CBF control barrier function slices]{\label{fig.cbf_fs}\includegraphics[width=0.9\columnwidth]{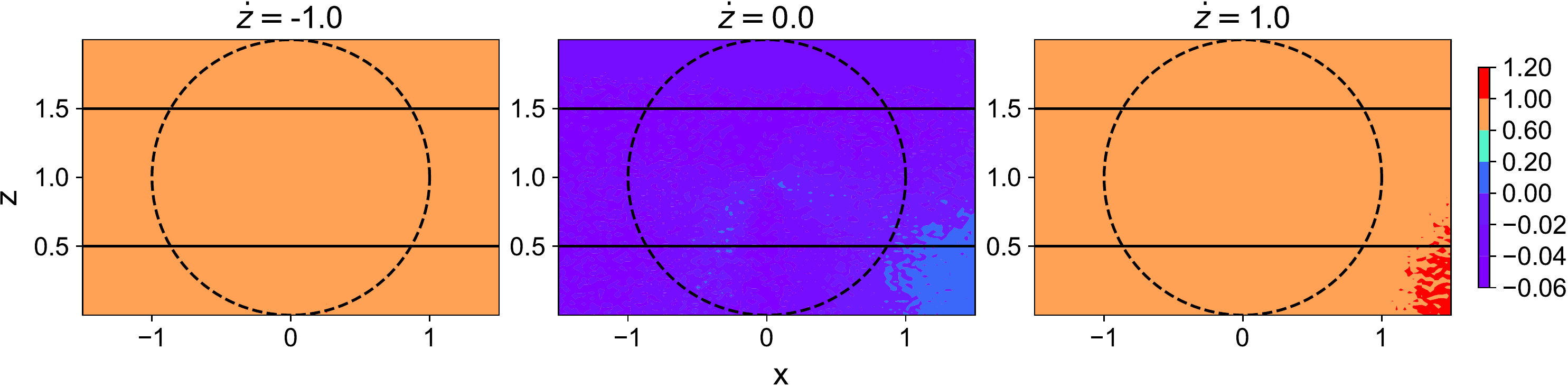}}\\
\vspace{-0.1in}
\subfigure[SAC-SI safety index slices]{\label{fig.si_fs}\includegraphics[width=0.9\columnwidth]{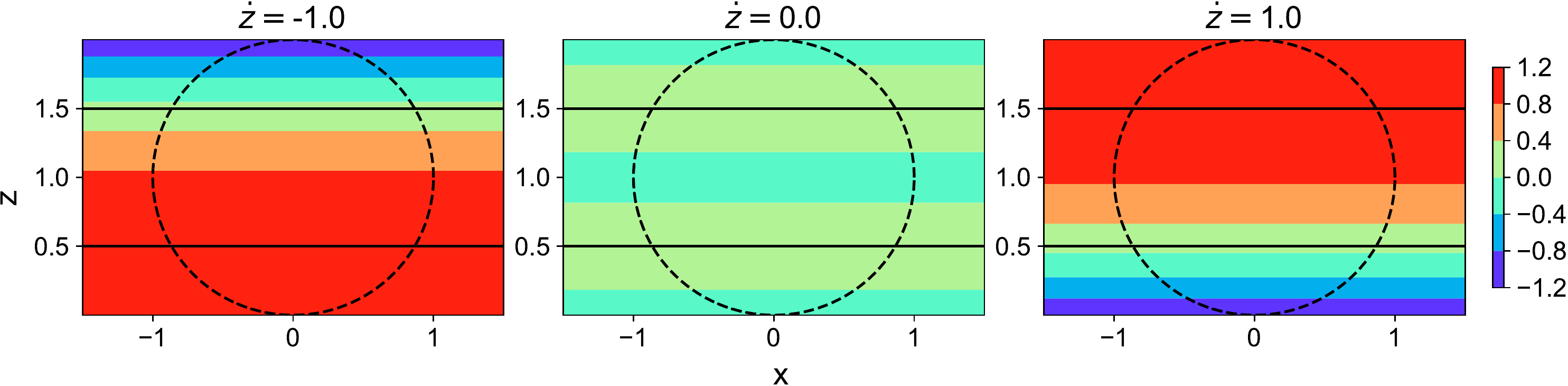}}\\
\vspace{-0.1in}
\caption{The learned feasible set slices on the 2D $xz$-plane with different $\dot{z}$. Values below zero mean that the states are feasible.}
\label{fig.region}
\vskip -0.4in
\end{center}
\end{figure}

\cref{fig.region} shows the slices on $xz$-plane of the feasible sets learned by RAC, SAC-CBF, and SAC-SI with $\dot{z}=-1, 0, 1$. The sub-zero level set of each constrained function represents the learned feasible sets, i.e. $\{s\mid F^\pi(s,a;\phi)\le0\}$ where $F=Q_h, B$, or $\varphi$. In \cref{fig.rac_fs}, the feasible sets of RAC (inside the zero-contour) is smaller than $\SPACE{S}_c$ because when the quadrotor at the boundary of $\SPACE{S}_c$ with a velocity pointing outside, it is doomed to fly out of the space, leading to constraint violation. Thus, such states are supposed to be potentially dangerous and must have a super-zero safety value. Characterizing the feasible set helps RAC track the trajectory accurately inside the safe set (\emph{optimality}) and satisfy the constraint strictly (\emph{safety}). In contrast, an energy function like CBF or SI relies on prior knowledge about the dynamical system. When we choose empirical hyperparameters, the algorithms will possibly learn the wrong feasible set, which leads to either constraint violation or poor performance. As shown in \cref{fig.cbf_fs} and \ref{fig.si_fs}, when $\dot{z}\ne0$, the whole safe set is considered unsafe because of conservativeness. SAC-CBF considers $\dot{z}=0$ as safe, leading to a horizontal-moving policy while SAC-SI leans a wrong feasible set when $\dot{z}=0$, leading to its poor performance.


\subsection{Safety-Gym: Moving with Sensor Inputs}
All previous tasks have full knowledge of the exact system states in the observations. In this section, we demonstrate the effectiveness of RCRL in complex safe control tasks with only high-dimensional sensor inputs, such as Lidar, on \textbf{Safety-Gym}. \textbf{Safety-Gym} is a widely used safe RL/CRL benchmark. In \cref{fig:safety_gym}, \texttt{point}, \texttt{car} or \texttt{doggo} agents (red) are controlled to reach \texttt{goal} (green) while avoiding \texttt{hazards} (blue, non-physical) or \texttt{pillars} (purple, physical). The tasks are named as \texttt{\{Agent\}}-\texttt{\{Obstacles\}}. The first two tasks have 76D observations space and the last task has 112D observation space and 12D action space.
\begin{figure}[htb]
    \centering
    \subfigure[Car-Hazards]{\includegraphics[width=0.32\linewidth]{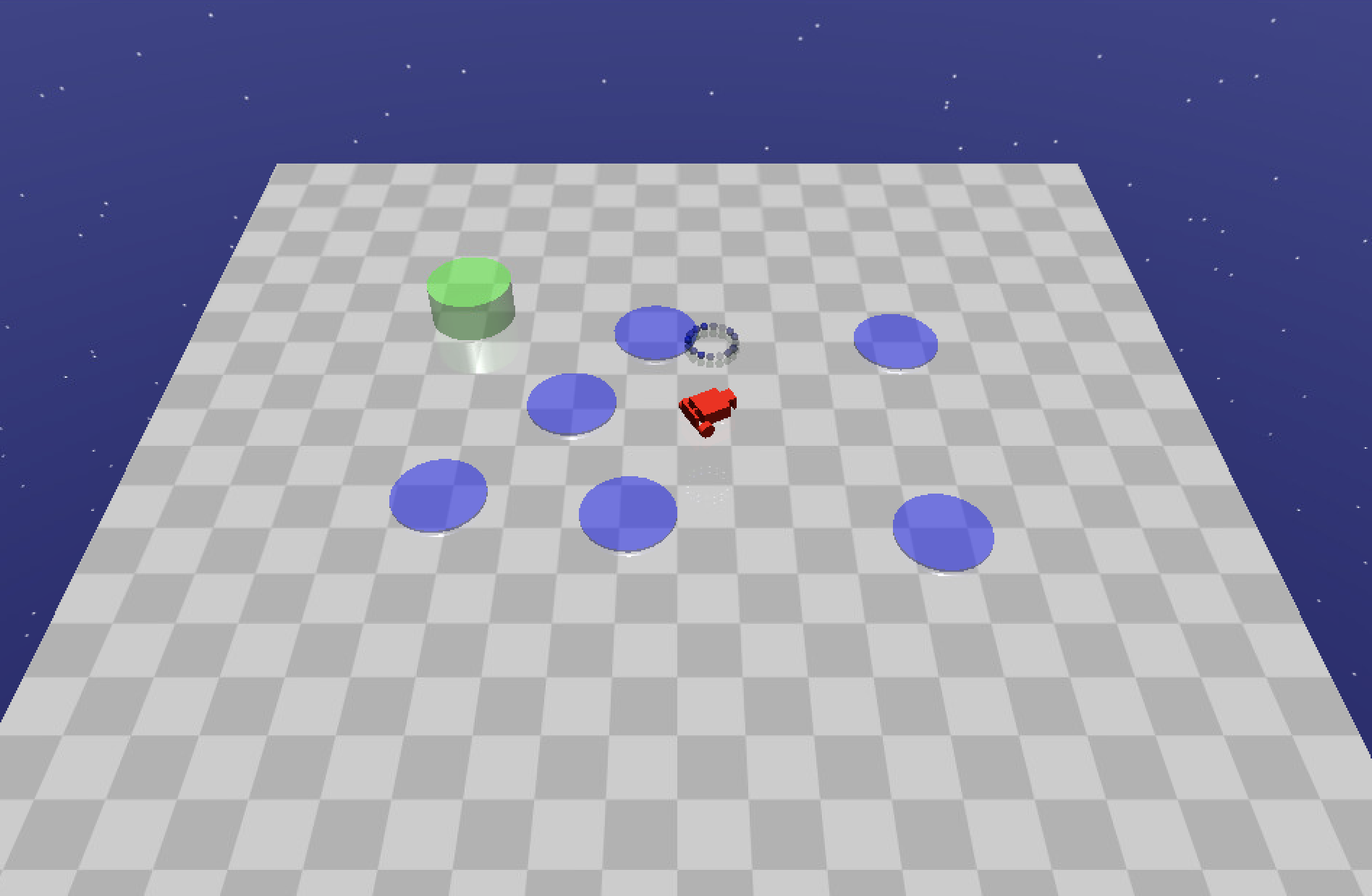}}
    \subfigure[Point-Pillars]{\includegraphics[width=0.32\linewidth]{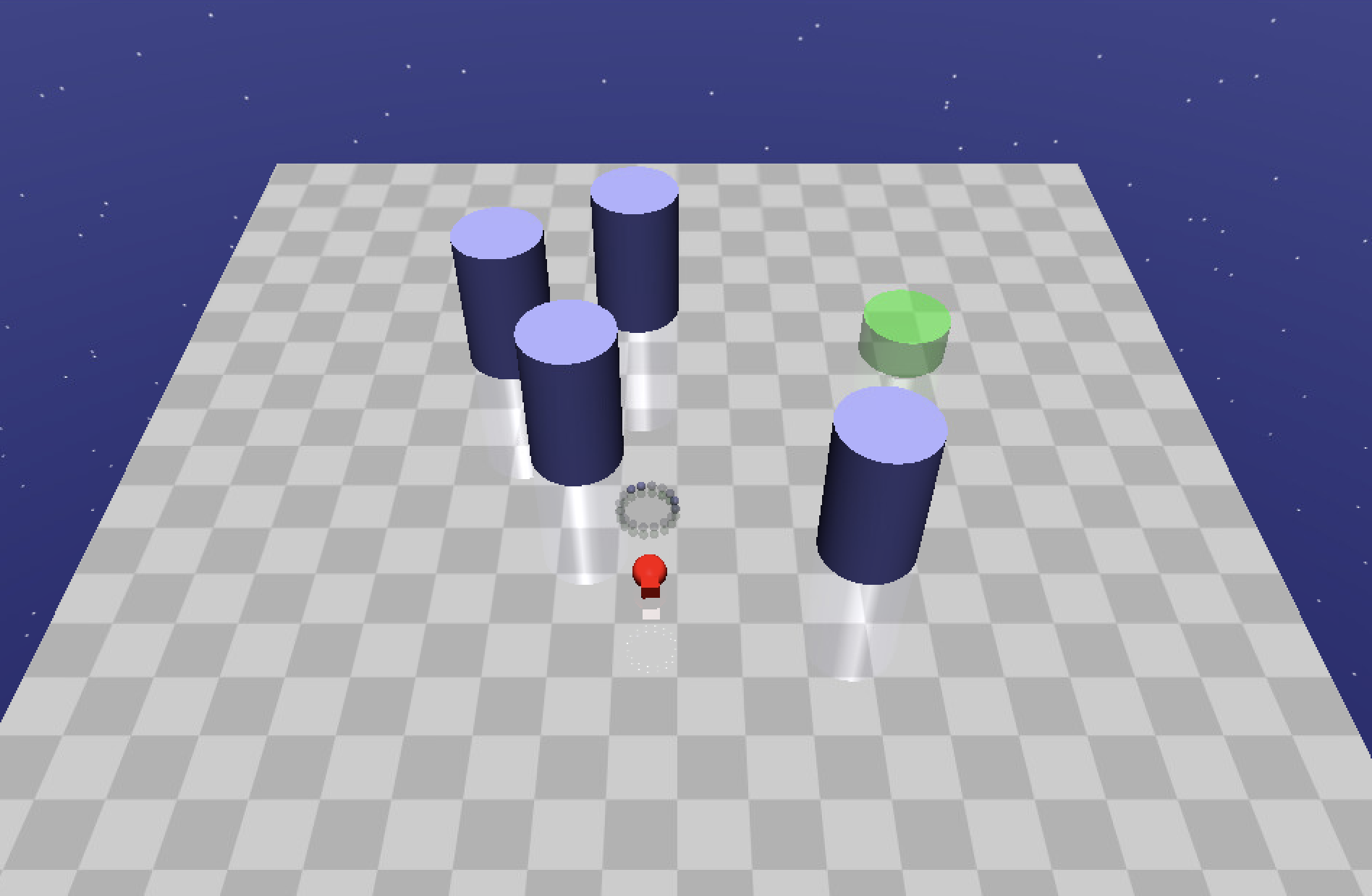}}
    \subfigure[Doggo-Hazards]{\includegraphics[width=0.32\linewidth]{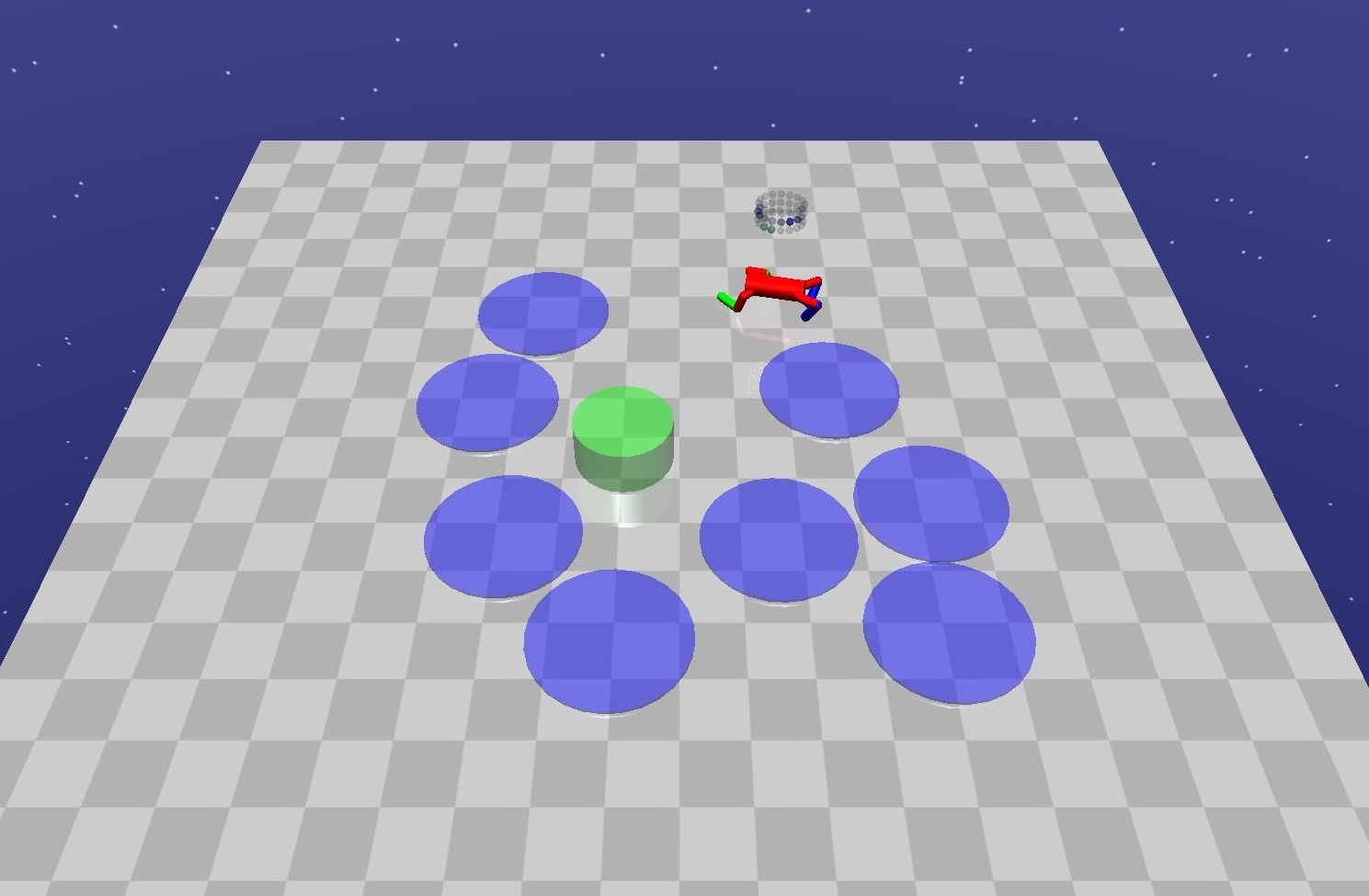}}
    \vspace{-4pt}
    \caption{Safety-Gym tasks.}
    \label{fig:safety_gym}
    \vskip -2pt
\end{figure}
 
\textbf{Baselines.} We name the \emph{on-policy} version of RCRL as Reachability Constrained Optimization \textbf{(RCO)}. Baseline algorithms in Safety-Gym tasks include traditional CRL baseline \textbf{PPO-Lagrangian} \cite{schulman2017ppo, Achiam2019safetygym} and two constrained version of PPO with energy-function based constraints (named as \textbf{PPO-CBF} and \textbf{PPO-SI}), and unconstrained baseline \textbf{PPO}. The distance and relative speed are approximated from the Lidar and speedometer sensors. For multiple obstacles, we select the one with the closest distance (in RCO) or the safety energy decreases (in PPO-SI and PPO-CBF) to compute the safety value functions or energy functions used for constraints.

\begin{figure}[htb]
    \centering
    \includegraphics[width=0.45\linewidth]{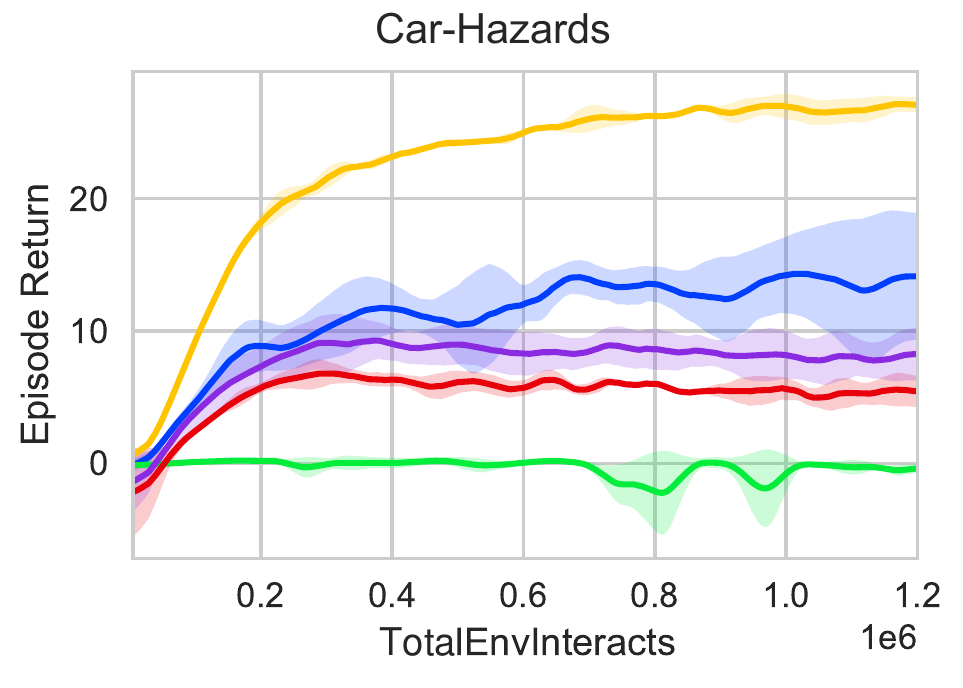}
    \includegraphics[width=0.45\linewidth]{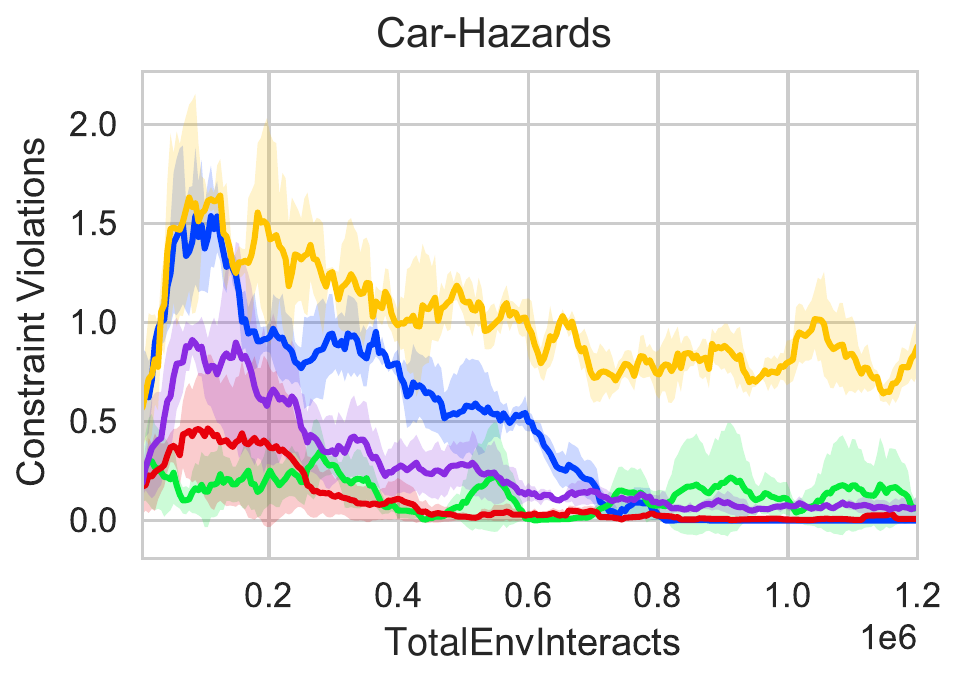}
    \includegraphics[width=0.45\linewidth]{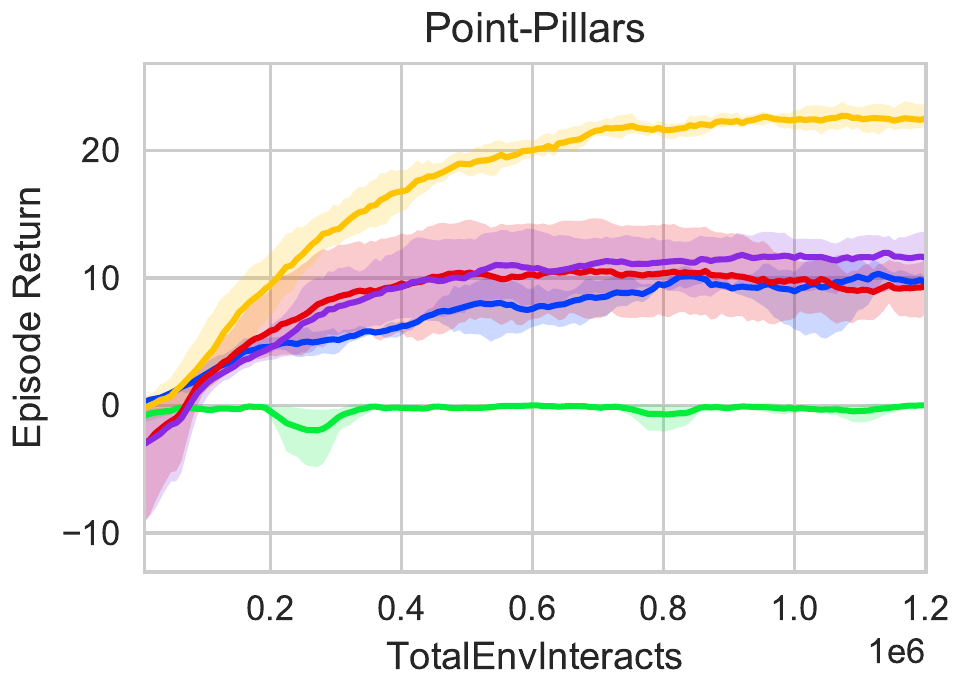}
    \includegraphics[width=0.45\linewidth]{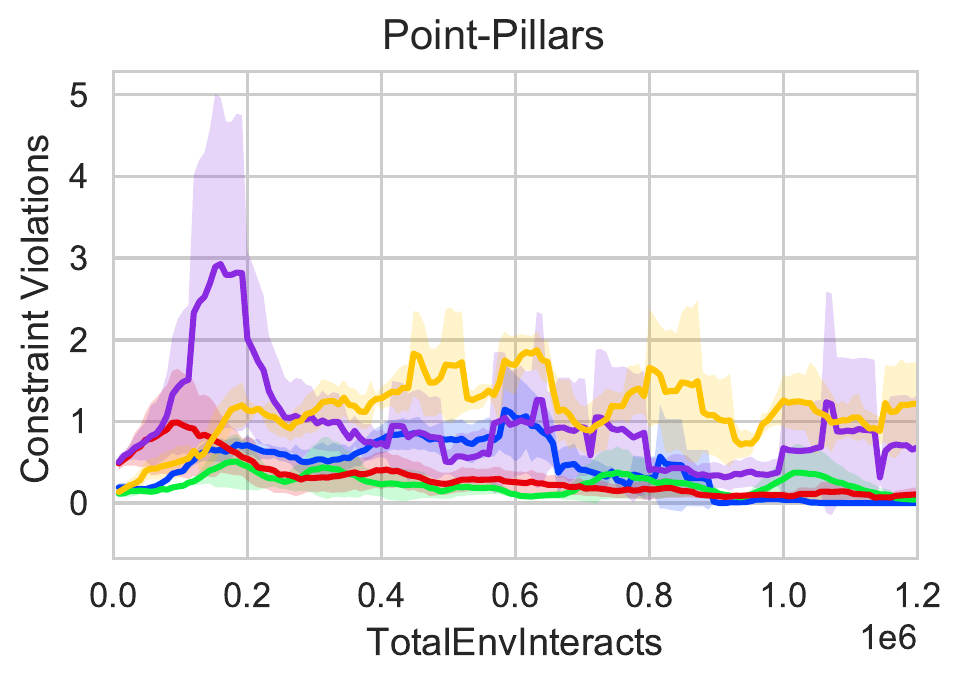}
    \includegraphics[width=0.45\linewidth]{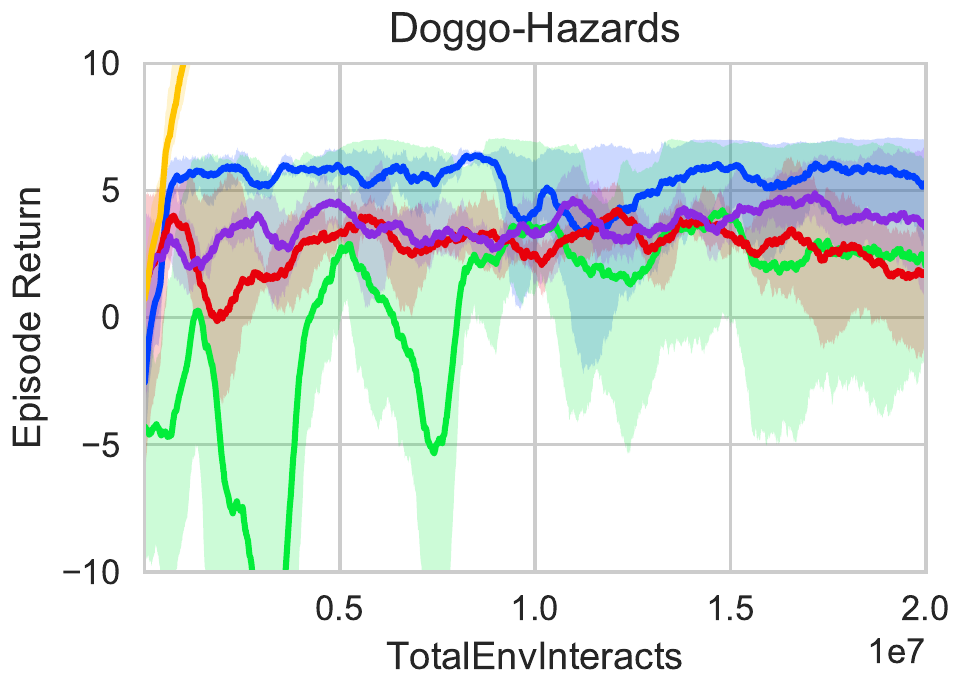}
    \includegraphics[width=0.45\linewidth]{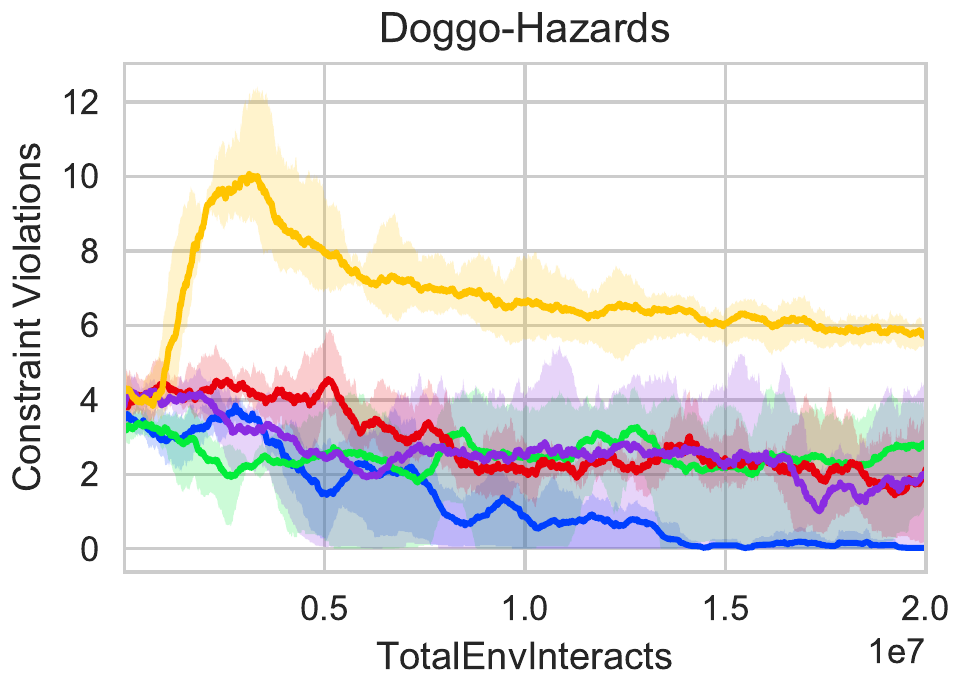}
    \includegraphics[width=0.8\linewidth]{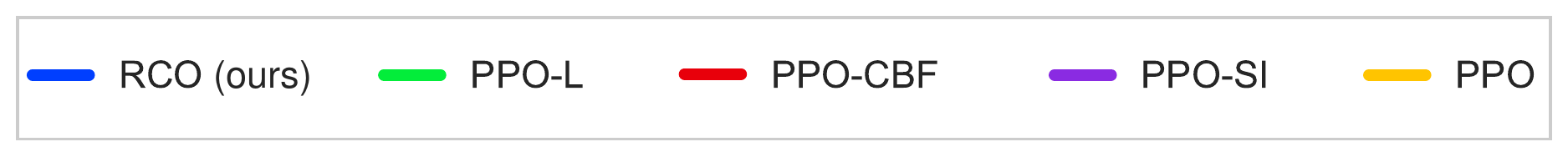}
    \vspace{-0.05in}
    \caption{Training results of Safety-Gym tasks.}
    \label{fig:safety_gym_train}
    \vskip -5pt
\end{figure}

The training results are shown in \cref{fig:safety_gym_train}. The violation rate figures indicate that RCO has the best constraint satisfaction performance, while energy-based baselines PPO-CBF, PPO-SI oscillate around zero violation, indicating RCO learns the feasible sets more exactly. On the contrary, the prior parameters of SI and CBF fail to characterize the exact feasible sets. Meanwhile, RCO has comparable or better return performance, further verifying that RCO reduces the conservativeness compared to energy-based methods.

\section{Concluding Remarks}
We study the novel reachability constraint in CRL, where the safety value function is constrained, guaranteeing persistent constraint-satisfaction. We establish the self-consistency condition of the safety value function, which enables us to characterize the largest feasible set in CRL and consider performance optimality besides safety, avoiding the safety-oriented policies in prior HJ reachability analysis methods. We also prove the convergence of the proposed RCRL with multi-time scale stochastic approximation theory under mild assumptions. Experiments on common benchmarks indicate that RCRL is able to capture the approximate feasible sets, which further guarantees the persistent safety and the competitive performance w.r.t. baselines.

Although empirical results show that RCRL generates persistently safe agents after the convergence of training, like many other Lagrangian-based methods such as \cite{tessler2018rcpo, ma2021fac}, RCRL focuses on safety \textit{after convergence} rather than that \textit{during learning}, while the latter is significant for real-world application of RL algorithms. Furthermore, RCRL explores the boundary of the feasible sets instead of conservatively staying inside them, leading to more violations during the early training stage, which can be seen in \cref{fig.performance_scg} and \ref{fig:safety_gym_train}. We are working on improving RCRL with different approaches such as model-based methods for \textit{safer} exploration and learning.

\section*{Acknowledgements}
The authors would like to thank Dr. Jingliang Duan and Wenjun Zou for their valuable suggestions about the problem formulation and the writing of this paper; anonymous reviewers and meta-reviewers for their insightful comments. This work was supported in part by NSF China, under Grant No. U20A201622, U20A20334 and 52072213. This work was also supported by the Ministry of Science and Technology of the People’s Republic of China, the 2030 Innovation Megaprojects ``Program on New Generation Artificial Intelligence" (Grant No. 2021AAA0150000). The authors want to thank support from the Tsinghua University-Toyota Joint Research Center for AI Technology of Automated Vehicle and Horizon Robotics.


\nocite{Zhang2020Why, duan2021dsac}

\bibliography{RCRL}
\bibliographystyle{icml2022}

\newpage
\appendix
\onecolumn

\section{Loss Function and Gradients Derivation}
\label{sec.grad_derivation}

The Q-value loss is the mean square error between the approximated Q function and its target \cite{Lillicrap2015ddpg}:
\begin{equation}
\label{eq.q_loss}
\mathcal{J}_{Q}(\omega) = \mathbb{E}_{s\sim \mathcal{D}}[1/2(Q(s,a;\omega)-\hat{Q}(s,a))^2]
\end{equation}
where
\begin{equation}
\nonumber
\hat{Q}(s,a)=r(s,a)+\gamma \mathbb{E}_{s'\sim P}[Q_(s',\pi(s');\omega)].
\end{equation}

Therefore, we are able to derive the stochastic gradients of each objective w.r.t. the parameter vectors in the approximate function. A sample $(s_t, a_t, s_{t+1})$ at time step $t$ is leveraged to compute the stochastic gradients. First, \eqref{eq.q_loss} and \eqref{eq.safety_q_loss} can be optimized with stochastic gradients descent (SGD):
\begin{equation}
\begin{aligned}
\label{eq.q_grad}
\hat{\nabla}_{\omega}{\mathcal{J}_Q}(\omega)=&{\nabla}_{\omega}{Q(s_t,a_t;\omega)} \cdot \left[ Q_\omega(s_t,a_t) - (r(s_t,a_t)+\gamma Q(s_{t+1},a_{t+1};\omega)) \right], \\
\hat{\nabla}_{\phi}{\mathcal{J}_{Q_h}}(\phi)=&{\nabla}_{\phi}{Q_h(s_t,a_t;\phi)} \cdot \left[ Q_h(s_t,a_t;\phi) - ((1-\gamma)h(s)+\gamma \max\{h(s),Q_h(s_{t+1},a_{t+1};\phi)\}) \right],
\end{aligned}
\end{equation}
where $a_{t+1}=\pi(s_{t+1})$.

Combining results from \cite{silver2014dpg, Lillicrap2015ddpg}, the estimated deterministic policy gradient (DPG) is
\begin{equation}
\begin{aligned}
\label{eq.policy_grad}
\hat{\nabla}_{\theta}{\mathcal{J}_\pi}(\theta)=
&{\nabla}_{a}\left[-Q_\omega(s_t,a_t)+\lambda_\xi(s_t)Q_h(s_t,a_t;\phi)\right]|_{a=a_t}\cdot {\nabla_\theta{\pi_\theta(s_t)}}.
\end{aligned}
\end{equation}

Then the stochastic gradient of the multiplier is used to ascend \eqref{eq.policy_loss}:
\begin{equation}
\label{eq.multiplier_grad}
\hat{\nabla}_{\xi}{\mathcal{J}_\lambda}(\xi)={Q_h(s_t,\pi_\theta(s_t);\phi)}\nabla_\xi\lambda_\xi(s_t).
\end{equation}

\section{Proofs}
\subsection{Proof of \cref{theo.consistency}}
\label{proof.sbe}
We only prove the self-consistency condition because the SBE can be proven similarly. From the definition of the safety value function, we know that
\begin{equation}
\begin{aligned}
V_h^\pi(s) &= \max_{t\in \mathbb{N}} h(s_{t}^{\pi}\mid s_0=s) \\
&= \max \{h(s), \max_{t\in\mathbb{N^+}} h(s_t^\pi|s_0=s) \} \\
&= \max \{h(s), \max_{t\in\mathbb{N^+}} h(s_t^\pi|s_1=s') \} \\
&= \max \{h(s), \max_{t\in\mathbb{N}} h(s_t^\pi|s_0=s') \} \\
&= \max \{h(s), V_h^\pi(s') \},
\end{aligned}
\end{equation}
where $s' \sim P(\cdot\mid s,\pi(s))$.



\subsection{Proof of \cref{theo.equivalent_region}}
\label{proof.region}
It is obvious that $\SPACE{S}^{\pi}_{f} \subseteq \SPACE{S}_{f}$, we only need to prove that $\SPACE{S}_{f} \subseteq \SPACE{S}^{\pi}_{f}$. When $\SPACE{S}_f\subseteq\SPACE{S}_0$, we have $\SPACE{S}_f \cap \SPACE{S}_0=\SPACE{S}_f$. Therefore, the constraint in \eqref{eq.rcrl_problem} becomes
\begin{equation}
V^{\pi}_h(s) \le 0, \forall s \in \SPACE{S}_f.
\end{equation}
Thus we have $s\in\SPACE{S}_f^{\pi}$ by \cref{def.feasible_set}. In other words, we can conclude that if $s\in \SPACE{S}_f$, we will get $s\in\SPACE{S}_{f}^{\pi}$, which means $\SPACE{S}_{f} \subseteq \SPACE{S}^{\pi}_{f}$.

\subsection{Proof of \cref{theo.equivalent_lag}}
\label{proof.equivalent_lag}
We start from decomposing the surrogate Lagrangian \eqref{eq.surrogate_lag}:
\begin{equation}
\begin{aligned}
& \min_{\pi}\max_{\lambda}\hat{\mathcal{L}}(\pi,\lambda) \\
=& \min_{\pi}\max_{\lambda} \mathbb{E}_{s\sim d_0}[-V^\pi(s) + \lambda(s){V^\pi_h(s)}] \\
=& \min_{\pi}\max_{\lambda} \left\{\mathbb{E}_{s\sim d_0}\left[\left(-V^\pi(s)+\lambda(s){V^\pi_h(s)}\right)\cdot\mathbbm{1}_{s\in\SPACE{S}_f}\right] + \mathbb{E}_{s\sim d_0}\left[\left(-V^\pi(s)+\lambda(s){V^\pi_h(s)}\right)\cdot\mathbbm{1}_{s\notin\SPACE{S}_f}\right] \right\}\\
=& \min_{\pi}\max_{\lambda} \mathbb{E}_{s\sim d_0}\left[\left(-V^\pi(s)+\lambda(s){V^\pi_h(s)}\right)\cdot\mathbbm{1}_{s\in\SPACE{S}_f}\right] \\ &\quad\quad\quad\quad + \min_{\pi} \blue{\max_{\lambda}} \mathbb{E}_{s\sim d_0}\left[(-V^\pi(s)+\lambda(s)\underbrace{V^\pi_h(s)}_{>0})\cdot\mathbbm{1}_{s\notin\SPACE{S}_f}\right]\\
=& \underbrace{\min_{\pi}\max_{\lambda} \mathbb{E}_{s\sim d_0}\left[\left(-V^\pi(s)+\lambda(s){V^\pi_h(s)}\right)\cdot\mathbbm{1}_{s\in\SPACE{S}_f}\right]}_{\rm Part\ 1} \\
&\quad\quad\quad\quad + \underbrace{\min_{\pi} \mathbb{E}_{s\sim d_0}\left[(-V^\pi(s)+\blue{\lambda_{\rm max}}{V^\pi_h(s)})\cdot\mathbbm{1}_{s\notin\SPACE{S}_f}\right]}_{\rm Part\ 2}.
\end{aligned}
\end{equation}
Note that from line 3 to line 4 the $\min\max$ can be decomposed into two parts because the policy (or the multiplier) is statewise and the results $\pi(s)$ (or $\lambda(s)$) of states inside and outside $\SPACE{S}_f$ are independent.

One the one hand, when $\lambda_{\rm max}\to+\infty$, Part 2 will be dominated by $V^\pi_h(s)>0$. Thus, $\lim_{\lambda_{\rm max}\to+\infty}\hat{\pi}^*$ tries to minimize the expected safety value function of initial states outside the largest feasible set. On the other hand, Part 1 is to find the saddle point of the lagrangian of the constrained optimization problem: maximizing the expected return while satisfying the reachability constraint for all initial states inside $\SPACE{S}_f$. In other words,
\begin{equation}
\nonumber
\min_{\pi}\max_{\lambda} \mathbb{E}_{s\sim d_0}\left[\left(-V^\pi(s)+\lambda(s){V^\pi_h(s)}\right)\cdot\mathbbm{1}_{s\in\SPACE{S}_f}\right] \iff
\begin{array}{rl} \max_{\pi} & \mathbb{E}_{s\sim d_0(s)} [V^{\pi}(s) \cdot \mathbbm{1}_{s\in \SPACE{S}_f}] \\ \mbox{subject to} & V^{\pi}_h(s) \le 0, \forall s \in \SPACE{S}_f \cap \SPACE{S}_0 \end{array}.
\end{equation}

Overall, $\lim_{\lambda_{\rm max}\to+\infty}\hat{\pi}^*$ aims to (1) maximize the expected return while satisfy reachability constraints for initial states inside $\SPACE{S}_f$ and (2) minimize the safety value function of initial states outside $\SPACE{S}_f$. This is exactly what problem \eqref{eq.rcrl_problem} does. Therefore, we have $\lim_{\lambda_{\rm max}\to+\infty}\hat{\pi}^*=\pi^*$.

\subsection{Proof of \cref{theo.convergence}}
\label{proof.convergence}
The proof borrows heavily from \cite{chow2017cvar} and \cite{ma2021joint} which both follow the convergence proof of multi-time scale stochastic approximation algorithms in \cite{borkar2009stochastic}. A high level overview of the proof structure is shown as follows.
\begin{enumerate}
    \item By utilizing policy evaluation techniques, we show the critic and safety value function update converge (in the fastest time scale) almost surely to a fixed point solution $(\omega^*, \phi^*)$.
    \item Then, with convergence properties of multi-time scale discrete stochastic approximation algorithms, we show that each update $(\theta_k,\xi_k)$ converges almost surely to a stationary point $(\theta^*,\xi^*)$ of the corresponding continuous time system but with different speeds.
    \item By using Lyapunov analysis, we show that the continuous time system is locally asymptotically stable at the stationary point $(\theta^*,\xi^*)$.
    \item Since the Lyapunov function used in the above analysis is the Lagrangian function $\mathcal{L}(\theta, \xi)$, we conclude that the stationary point $(\theta^*,\xi^*)$ is a local saddle point. Finally by the local saddle point theorem, we deduce that $\theta^*$ is a locally optimal solution for the reachability constrained RL problem.
\end{enumerate}

\textbf{Time scale 1} (Convergence of $\omega$- and $\phi$-updates)
The step size \cref{assume.step_size} tells that $\{\omega_k\}$ and $\{\phi_k\}$ converges on a faster time scale than $\{\theta_k\}$ and $\{\xi_k\}$. According to \citep[Lemma 1, Chapter 6]{borkar2009stochastic}, 
we can treat $(\theta, \xi)$ as arbitrarily fixed quantities during updating $\{\omega_k\}$ and $\{\phi_k\}$. Therefore, we take $(\theta, \xi)=(\theta_k, \xi_k)$, which means the policy and the multiplier are fixed and we are performing policy evaluation to compute $Q^{\pi_{\theta_k}}(s,a)$ and $Q_h^{\pi_{\theta_k}}(s,a)$. With the standard policy evaluation convergence results in \cite{sutton2018rl}, one can easily know that $Q(s,a;\omega_k) \rightarrow Q(s,a;\omega^*) = Q^{\pi_{\theta_k}}(s,a)$ as $k\to\infty$ because the operator $\mathcal{B}$ defined by
\begin{equation}
\nonumber
\mathcal{B}[Q(s,a)]=r(s,a)+\gamma \mathbb{E}_{s'\sim P}[Q(s',\pi(s'))]
\end{equation}
is a $\gamma$ contraction mapping. Thus, all we need to do is to prove that the safety value evaluation in \eqref{eq.q_h_target} is a $\gamma$-contraction mapping as well, which is stated in the following Lemma.
\begin{lemma}[$\gamma$-contraction Mapping]
Under \cref{assume.mdp}, the operator $\mathcal{B}_h$ introduced by $\mathcal{B}_h[Q_h(s,a)]=(1-\gamma)h(s)+\gamma \max\{h(s), \mathbb{E}_{s'\sim P}[Q_h(s',\pi(s'))]\}$ is a $\gamma$-contraction mapping.
\end{lemma}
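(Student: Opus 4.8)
The plan is to show that $\mathcal{B}_h$ is a contraction in the supremum norm $\|\cdot\|_\infty$ taken over all state-action pairs, i.e.\ to establish $\|\mathcal{B}_h[Q_h^1] - \mathcal{B}_h[Q_h^2]\|_\infty \le \gamma \|Q_h^1 - Q_h^2\|_\infty$ for any two bounded functions $Q_h^1, Q_h^2$. First I would fix an arbitrary pair $(s,a)$ and form the pointwise difference. Since the additive term $(1-\gamma)h(s)$ does not depend on $Q_h$, it cancels, leaving
\begin{equation}
\nonumber
\mathcal{B}_h[Q_h^1](s,a) - \mathcal{B}_h[Q_h^2](s,a) = \gamma\left(\max\{h(s), \mathbb{E}_{s'\sim P}[Q_h^1(s',\pi(s'))]\} - \max\{h(s), \mathbb{E}_{s'\sim P}[Q_h^2(s',\pi(s'))]\}\right).
\end{equation}

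The crux is the nonexpansiveness of the $\max$ operator: for any fixed real $a_0$ and reals $b,c$ one has $|\max\{a_0,b\} - \max\{a_0,c\}| \le |b - c|$, since the map $x \mapsto \max\{a_0,x\}$ is $1$-Lipschitz. Applying this with $a_0 = h(s)$ and then moving the absolute value inside the expectation via the triangle inequality (Jensen), I obtain
\begin{equation}
\nonumber
|\mathcal{B}_h[Q_h^1](s,a) - \mathcal{B}_h[Q_h^2](s,a)| \le \gamma \, \mathbb{E}_{s'\sim P}\left[|Q_h^1(s',\pi(s')) - Q_h^2(s',\pi(s'))|\right] \le \gamma \|Q_h^1 - Q_h^2\|_\infty.
\end{equation}
Taking the supremum over all $(s,a)$ on the left-hand side then yields the claimed contraction factor $\gamma$.

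The main obstacle, and really the only point that differs from the standard contraction argument for the ordinary Bellman operator $\mathcal{B}$, is handling the $\max$ instead of a purely linear expectation; everything hinges on the lemma that $\max\{h(s),\cdot\}$ is $1$-Lipschitz, so the $\max$ neither amplifies nor disrupts the contraction supplied by the discount factor $\gamma$. Finally, \cref{assume.mdp} (finite, bounded MDP with bounded $h$) guarantees that the safety Q-functions live in a complete normed space, so the Banach fixed-point theorem applies and $\mathcal{B}_h$ admits a unique fixed point. This is exactly what is needed to conclude the almost-sure convergence of the $\phi$-update asserted in Time scale 1.
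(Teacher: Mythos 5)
Your proof is correct and follows essentially the same route as the paper's: cancel the $(1-\gamma)h(s)$ term, exploit the $1$-Lipschitz (nonexpansive) property of $\max\{h(s),\cdot\}$, and bound the expectation of the difference by the supremum norm to extract the factor $\gamma$. The only differences are cosmetic — you argue pointwise before taking the supremum and explicitly note completeness for the Banach fixed-point theorem, which the paper leaves implicit via its citation of \citep[Proposition A.26]{bertsekas2016nonlinear}.
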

\textit{Proof.} We study the supremum norm of $\mathcal{B}_h$. For any $Q_h$ and $\hat{Q}_h$, the following holds:
\begin{equation}
\begin{aligned}
\nonumber
\Vert \mathcal{B}_h[Q_h(s,a)] - \mathcal{B}_h[\hat{Q}_h(s,a)] \Vert_\infty &= \gamma \Vert \max\{h(s), \mathbb{E}_{s'\sim P}[Q_h(s',\pi(s'))]\} - \max\{h(s), \mathbb{E}_{s'\sim P}[\hat{Q}_h(s',\pi(s'))]\}\Vert_\infty \\
& \le \gamma\Vert \mathbb{E}_{s'\sim P}[Q_h(s',\pi(s'))] - \mathbb{E}_{s'\sim P}[\hat{Q}_h(s',\pi(s'))]\Vert_\infty \\
& = \gamma\Vert \mathbb{E}_{s'\sim P}[Q_h(s',\pi(s')) - \hat{Q}_h(s',\pi(s'))]\Vert_\infty \\
& \le \gamma \Vert Q_h(s,a) - \hat{Q}_h(s,a)\Vert_\infty.
\end{aligned}
\end{equation} \qed

According to \citep[Proposition A.26]{bertsekas2016nonlinear}, we can conclude that $Q_h(s,a;\phi_k)$ will converge to $Q_h(s,a;\phi^*)=Q_h^{\pi_{\theta_k}}(s,a)$ as $k\to\infty$. Hence, both $\omega_k$ and $\phi_k$ converge to $\omega^*$ and $\phi^*$, respectively and the convergence of \textbf{Time scale 1} is proved.

\textbf{Time scale 2} (Convergence of $\theta$-update) Due to the faster convergence speed of $\theta_k$ than $\xi_k$, we can take $\xi=\xi_k$ when updating $\theta$ according to \citep[Lemma. 1, Chapter 6]{borkar2009stochastic}. Furthermore, since $\omega$ and $\phi$ converge on a faster speed than $\theta$, we have $\Vert Q(s,a;\omega_k)- Q^{\pi_{\theta_k}}(s,a)\Vert\rightarrow0$ and $\Vert Q_h(s,a;\phi_k)- Q_h^{\pi_{\theta_k}}(s,a)\Vert\rightarrow0$ almost surely. Assume that the sample distribution is the same as $\mathcal{D}$. The $\theta$-update from \eqref{eq.policy_grad} is
\begin{equation}
\label{eq.policy_update}
    \theta_{k+1}=\Gamma_\Theta\left[\theta_k-\beta_2(k){\nabla}_{a}(-Q_{\omega_k}(s_t,a)+\lambda_{\xi_k}(s_t)Q_h(s_t,a;\phi_k))|_{a=a_t}\cdot{\nabla_\theta{\pi_\theta(s_t)|_{\theta=\theta_k}}} \right].
\end{equation}
\eqref{eq.policy_update} can also be rewritten as:
\begin{equation}
\nonumber
    \theta_{k+1}=\Gamma_\Theta\left[ \theta_k+\beta_2(k) (-\nabla_\theta \mathcal{L(\theta,\xi)}|_{\theta=\theta_k}+\delta\theta_{k+1}+\delta\theta_\epsilon )\right].
\end{equation}
where
\begin{equation}
\begin{aligned}
\nonumber
    \delta\theta_{k+1} =& \mathbb{E}_{s\sim \mathcal{D}}\left[ \nabla_a(-Q_{\omega_k}(s,a)+\lambda(s;\xi_k)Q_h(s,a;\phi_k))|_{a=\pi(s;\theta_k)} \nabla_\theta\pi(s;\theta)|_{\theta=\theta_k} \right] \\
    &-{\nabla}_{a}(-Q_{\omega_k}(s_t,a)+\lambda_{\xi_k}(s_t)Q_h(s_t,a;\phi_k))|_{a=a_t}\cdot{\nabla_\theta{\pi(s_t;\theta)|_{\theta=\theta_k}}}
\end{aligned}
\end{equation}
and
\begin{equation}
\begin{aligned}
\nonumber
    \delta\theta_\epsilon = \mathbb{E}_{s\sim \mathcal{D}}[ &-\nabla_a(-Q(s,a;\omega_k)+\lambda(s;\xi_k)Q_h(s,a;\phi_k))|_{a=\pi(s;\theta_k)}\nabla_\theta\pi(s;\theta)|_{\theta=\theta_k} \\
    &+ \nabla_a(-Q^{\pi_{\theta_k}}(s,a)+\lambda(s;\xi_k)Q_h^{\pi_{\theta_k}}(s,a))|_{a=\pi(s;\theta_k)} \nabla_\theta\pi(s;\theta)|_{\theta=\theta_k}]
\end{aligned}
\end{equation}

\begin{enumerate}
\item We will show that $\delta\theta_{k+1}$ is square integrable first, specifically,
    \begin{equation}
    \nonumber
    \mathbb{E}[\Vert \delta\theta_{k+1} \Vert^2 \mid \mathcal{F}_{\theta, k}] \le 4\cdot \left[ \Vert \nabla_\theta \pi_\theta(s)|_{\theta=\theta_k} \Vert_\infty^2 \times (\Vert \nabla_a Q(s,a;\omega_k) \Vert_\infty^2 + \Vert \lambda(s;\xi_k) \Vert_\infty^2 \cdot \Vert \nabla_a Q_h(s,a;\phi_k) \Vert_\infty^2)\right].
    \end{equation}
    \cref{assume.differentiability} implies that
    \begin{equation}
    \begin{aligned}
    \nonumber
    \Vert \nabla_\theta \pi_\theta(s)|_{\theta=\theta_k} \Vert_\infty^2 &\le K_1(1+\Vert \theta_k \Vert_\infty^2 ), \\
    \Vert \nabla_a Q(s,a;\omega_k) \Vert_\infty^2 &\le K_2(1+\max_{a\in\SPACE{A}}\Vert a \Vert_\infty^2 ), \\
    \Vert \nabla_a Q_h(s,a;\phi_k) \Vert_\infty^2 &\le K_3(1+\max_{a\in\SPACE{A}}\Vert a \Vert_\infty^2 ).
    \end{aligned}
    \end{equation}
    where $K_1, K_2, K_3$ is three sufficiently large scalars. Furthermore, $\lambda(s;\xi_k)$ can be bounded by $\lambda_{\rm max}$ due to the multiplier upper bound. Because of the aforementioned conditions, we can conclude that $\mathbb{E}[\Vert \delta\theta_{k+1} \Vert^2 \mid \mathcal{F}_{\theta, k}] \le 4 K_1(1+\Vert \theta_k \Vert_\infty^2 )[K_2(1+\max_{a\in\SPACE{A}}\Vert a \Vert_\infty^2 )+\lambda_{\rm max}( K_3(1+\max_{a\in\SPACE{A}}\Vert a \Vert_\infty^2 )]<\infty$. Thus $\delta\theta_{k+1}$ is square integrable.
\item Second, we will show $\delta\theta_\epsilon \rightarrow 0$. Specifically, 
    \begin{equation}
    \begin{aligned}
    \nonumber
    \delta\theta_\epsilon &= \mathbb{E}_{s\sim \mathcal{D}}[ 
    \nabla_a(- Q^{\pi_{\theta_k}}(s,a) + Q(s,a;\omega_k) + \lambda(s;\xi_k)
    (Q^{\pi_{\theta_k}}_h(s,a)-Q_h(s,a;\phi_k))) |_{a=\pi(s;\theta_k)}
    \nabla_\theta\pi(s;\theta)|_{\theta=\theta_k}] \\
    &= \mathbb{E}_{s\sim \mathcal{D}}[ 
    \nabla_a(- Q(s,a;\omega^*) + Q(s,a;\omega_k) + \lambda(s;\xi_k)
    (Q_h(s,a;\phi^*)-Q_h(s,a;\phi_k))) |_{a=\pi(s;\theta_k)}
    \nabla_\theta\pi(s;\theta)|_{\theta=\theta_k}] \\
    &\le \mathbb{E}_{s\sim \mathcal{D}}[\nabla_\theta\pi(s;\theta)|_{\theta=\theta_k}]\cdot(K_4\Vert\omega_k-\omega^*\Vert_\infty+\lambda_{max}K_5\Vert\phi_k-\phi^*\Vert_\infty) \rightarrow 0
    \end{aligned}
    \end{equation}
    where $K_4, K_5$ is the Lipschitz constant. The limit comes from the convergence of the parameters in \textbf{Time scale 1}.
\item Because $\hat{\nabla}_{\theta}{\mathcal{J}_\pi}(\theta)|_{\theta=\theta_k}$ is a sample of $\nabla_\theta \mathcal{L(\theta,\xi)}|_{\theta=\theta_k}$, we can conclude that $\mathbb{E}[\delta\theta_{k+1}\mid \mathcal{F}_{\theta, k}]=0$, where $\mathcal{F}_{\theta, k}=\sigma(\theta_m, \delta\theta_m, m\le k)$ is the filtration generated by different independent trajectories \cite{chow2017cvar}. 
\end{enumerate}
Based on the three facts, the $\theta$-update given by \eqref{eq.policy_update} is a stochastic approximation of the continuous system $\theta(t)$ \cite{borkar2009stochastic}, described by an ODE:
\begin{equation}
\label{eq.ode_theta}
\dot{\theta}=\Upsilon_\theta[-\nabla_\theta\mathcal{L}(\theta, \lambda)],
\end{equation}
where
\begin{equation}
\nonumber
\Upsilon_\theta[F(\theta)] \triangleq \lim\limits_{\eta\rightarrow 0^+}\frac{\Gamma_\Theta(\theta+\eta F(\theta))-\Gamma_\Theta(\theta)}{\eta}
\end{equation}
is the left directional derivative of the function $\Gamma_\Theta(\theta)$ in the direction of $F(\theta)$. The purpose of the directional derivative is to guarantee the update $\Upsilon_\theta[-\nabla_\theta\mathcal{L}(\theta, \lambda)]$ will point in the descent direction along the boundary of $\Theta$ when the $\theta$-update hits the boundary. Invoking the Step 2 in \citep[Appendix A.2]{chow2017cvar}, one can know that $d\mathcal{L}(\theta, \xi)/dt=-\nabla_\theta\mathcal{L}(\theta, \lambda)^T\cdot \Upsilon_\theta[-\nabla_\theta\mathcal{L}(\theta, \lambda)] \le 0$ and the value will be non-zero if $\Vert \Upsilon_\theta[-\nabla_\theta\mathcal{L}(\theta, \lambda)] \Vert \ne 0$. 

Let us consider the continuous system. For a given $\xi$, define a Lyapunov function
\begin{equation}
\nonumber
L_\xi(\theta)=\mathcal{L}(\theta, \xi)-\mathcal{L}(\theta^*, \xi)
\end{equation}
where $\theta^*$ is a local minimum point. Therefore, there exists a scalar $r$ such that $\forall \theta \in B_r(\theta^*)=\{\theta \mid \Vert \theta-\theta^* \Vert \le r \}$, $L_\xi(\theta)\ge0$. Moreover, according to \citep[Proposition 1.1.1]{bertsekas2016nonlinear}, we obtain $\Upsilon_\theta[-\nabla_\theta\mathcal{L}(\theta, \lambda)]|_{\theta=\theta^*}=0$, which means $\theta^*$ is a stationary point.
Due to the non-positive property of $d\mathcal{L}(\theta, \xi)/dt$ and refer to the \citep[Chapter 4]{khalil2002nonlinear}, aforementioned contents show that for any given initial condition $\theta\in B_r(\theta^*)$, the continuous trajectory of $\theta(t)$ of \eqref{eq.ode_theta} converges to $\theta^*$, i.e. $\mathcal{L}(\theta^*, \xi)\le\mathcal{L}(\theta(t), \xi)\le\mathcal{L}(\theta(0), \xi)$ for $\forall t\ge 0$.

Finally, because of the following properties:
\begin{enumerate}
    \item From \citep[Proposition 17]{chow2017cvar}, $\nabla_\theta\mathcal{L}(\theta, \xi)$ is a Lipschitz function in $\theta$;
    \item The step size schedules follow \cref{assume.step_size};
    \item $\delta\theta_{k+1}$ is a square Martingale difference sequence and $\delta\theta\epsilon$ is a vanishing error;
    \item $\theta_k \in \Theta$, which implies that $\sup\limits_{k}\Vert\theta_k\Vert<\infty$ almost surely,
\end{enumerate}
one can invoke \citep[Theorem 2, Chapter 6]{borkar2009stochastic} (multi-time scale stochastic approximation theory) to know that the sequence $\{\theta_k\}, \theta_k\in\Theta$ converges almost surely to the solution of \eqref{eq.ode_theta}, which further converges almost surely to the locally minimum point $\theta^*$.

\textbf{Time scale 3} (Convergence of $\xi$-update) The parameter $\xi$ of multiplier is on the slowest time scale so we can assume that during the $\xi$-update, the policy has converged to the local minimum point, i.e. $\Vert \theta_k - \theta^*(\xi_k)\Vert=0$ and the safety value has converged to a fixed quantity such that $\Vert Q_h(s,a;\phi_k) - Q_h^{\pi_{\theta_k}}(s,a)\Vert=0$. With the continuity of $\nabla_\xi\mathcal{L}(\theta,\xi)$, we have $\Vert \nabla_\xi\mathcal{L}(\theta,\xi)|_{\theta=\theta_k, \xi=\xi_k} - \nabla_\xi\mathcal{L}(\theta,\xi)|_{\theta=\theta^*(\xi_k), \xi=\xi_k} \Vert=0$ almost surely. Thus, the $\xi$-update can be expressed as:
\begin{equation}
\begin{aligned}
\label{eq.xi_update}
\xi_{k+1}&=\Gamma_\Xi\left(\xi_k+\beta_3(k){Q_h(s_t,\pi_{\theta_k}(s_t);\phi_k)}\nabla_\xi\lambda(s_t)|_{\xi=\xi_k}\right) \\
&=\Gamma_\Xi \left( \xi_k+\beta_3(k) ( \nabla_\xi\mathcal{L}(\theta,\xi)|_{\theta=\theta^*(\xi_k),\xi=\xi_k} + \delta\xi_{k+1}) \right)
\end{aligned}
\end{equation}
where
\begin{equation}
\begin{aligned}
\nonumber
\delta\xi_{k+1} &= -\nabla_\xi\mathcal{L}(\theta,\xi)|_{\theta=\theta^*(\xi_k),\xi=\xi_k} + {Q_h(s_t,\pi_{\theta_k}(s_t);\phi_k)}\nabla_\xi\lambda(s_t)|_{\xi=\xi_k} \\
&= - \mathbb{E}_{s\sim \mathcal{D}}[Q_h^{\pi_{\theta^*}}(s,\pi_{\theta^*}(s))\nabla_\xi{\lambda(s;\xi)}|_{\xi=\xi_k} ] + {Q_h(s_t,\pi_{\theta_k}(s_t);\phi_k)}\nabla_\xi\lambda(s_t)|_{\xi=\xi_k} \\
&= - \mathbb{E}_{s\sim \mathcal{D}}[Q_h^{\pi_{\theta^*}}(s,\pi_{\theta^*}(s))\nabla_\xi{\lambda(s;\xi)}|_{\xi=\xi_k} ] + \\ &\qquad \qquad(Q_h(s_t,\pi_{\theta_k}(s_t);\phi_k) - Q_h^{\pi_{\theta_k}}(s_t,\pi_\theta(s_t)) + Q_h^{\pi_{\theta_k}}(s_t,\pi_\theta(s_t)) )\nabla_\xi\lambda(s_t)|_{\xi=\xi_k}.
\end{aligned}
\end{equation}
Similar with $\theta$-update, we need to prove the followings:
\begin{enumerate}
    \item $\delta\xi_{k+1}$ is square integrable because
\begin{equation}
\nonumber
\mathbb{E}[\Vert \delta\xi_{k+1} \Vert^2\mid\mathcal{F}_{\xi,k}] \le 2 \times
\max_{s\in\SPACE{S}} \vert h(s) \vert^2
\times K_6(1+\Vert\xi_k\Vert^2_\infty) < \infty
\end{equation}
where $K_6$ is a sufficiently large number.

\item Since $\Vert Q_h(s_t,\pi_\theta(s_t);\phi_k) - Q_h^{\pi_{\theta_k}}(s_t,\pi_\theta(s_t) \Vert_\infty\rightarrow0$ and ${Q_h^{\pi_{\theta_k}}(s_t,\pi_\theta(s_t))}\nabla_\xi\lambda(s_t)|_{\xi=\xi_k}$ is sample of $\nabla_\xi\mathcal{L}(\theta,\xi)|_{\theta=\theta^*(\xi_k),\xi=\xi_k}$, one can conclude that $\mathbb{E}[\delta\xi_{k+1}\mid\mathcal{F}_{\xi,k}]=0$ almost surely, where $\mathcal{F}_{\xi,k}=\sigma(\xi_m,\delta\xi_m,m\le k)$ is the filtration of $\xi$ generated by different independent trajectories.
\end{enumerate}
Therefore, the $\xi$-update is a stochastic approximation of the continuous system
\begin{equation}
\label{eq.xi_continuous}
\dot{\xi}=\Upsilon_\Xi[\nabla_\xi\mathcal{L}(\theta,\xi)|_{\theta=\theta^*(\xi)}]
\end{equation}
with a Martingale difference error $\delta\xi_k$, where $\Upsilon_\Xi$ is the left directional derivative similarly defined in \textbf{Time scale 2}. Analogous to \textbf{Time scale 2} and in \citep[Appendix B.2]{chow2017cvar}, $d\mathcal{L}(\theta^*(\xi), \xi)/dt={\nabla_\xi\mathcal{L}(\theta,\xi)|_{\theta=\theta^*(\xi)}}^T\cdot\Upsilon_\Xi[\nabla_\xi\mathcal{L}(\theta,\xi)|_{\theta=\theta^*(\xi)}]\ge0$, which is non-zero if $\Vert \Upsilon_\Xi[\nabla_\xi\mathcal{L}(\theta,\xi)|_{\theta=\theta^*(\xi)}] \Vert\ne0$. 

For a local maximum point $\xi^*$, define a Lyapunov function
\begin{equation}
\nonumber
L(\xi)=\mathcal{L}(\theta^*(\xi), \xi^*)-\mathcal{L}(\theta^*(\xi), \xi).
\end{equation}
There exists a scalar $r'$ such that for $\forall \xi \in B_{r'}(\xi^*)=\{\xi\in\Xi\mid\Vert\xi-\xi^*\Vert\le r'\}$, $L(\xi)\ge0$. Moreover, $dL(\xi(t))/dt=-d\mathcal{L}(\theta^*(\xi), \xi)/dt\le0$ and the equal sign only holds when $\Upsilon_\Xi[\nabla_\xi\mathcal{L}(\theta,\xi)|_{\theta=\theta^*(\xi)}]=0$. This means $\xi^*$ is a stationary point. One can invoke \citep[Chapter 4]{khalil2002nonlinear} and conclude that given any initial condition $\xi\in B_{r'}(\xi^*)$, the trajectory of \eqref{eq.xi_continuous} convergences to $\xi^*$, which is a locally maximum point.

Now with (1) $\{\xi_k\}$ is a stochastic approximation to $\xi(t)$ with a Martingale difference error; (2) the step size schedules in \cref{assume.step_size}; (3) the convex and compact property in projection and (4) $\nabla_\xi \mathcal{L}(\theta^*(\xi),\xi)$ is a Lipschitz function in $\xi$. we can apply the multi-time scale stochastic approximation theory again and show that $\{\xi_k\}$ converges to a local maximum point $\xi^*$ almost surely, i.e. $\mathcal{L}(\theta^*(\xi^*), \xi^*) \ge \mathcal{L}(\theta^*(\xi), \xi)$.

\textbf{Local Saddle Point.} From \textbf{Time scale 2} and \textbf{3} we know that $\mathcal{L}(\theta^*(\xi^*), \xi^*) \ge \mathcal{L}(\theta^*(\xi), \xi)$ and $\mathcal{L}(\theta^*, \xi)\le\mathcal{L}(\theta, \xi)$. Thus, $\mathcal{L}(\theta^*, \xi) \le \mathcal{L}(\theta^*, \xi^*) \le \mathcal{L}(\theta, \xi^*)$, which means $(\theta^*, \xi^*)$ is a local saddle point of $\mathcal{L}(\theta, \xi)$. With the saddle point theorem \citep[Proposition 5.1.6]{bertsekas2016nonlinear}, we finally come to the conclusion that $\pi(\cdot ; \theta^*)$ is a locally optimal policy to the RCRL problem \eqref{eq.rcrl_problem}.

\section{Implementation Details of Algorithms}
\label{sec.algo_details}

\subsection{The Gap between Assumptions and Practical Implementations}
\label{sec.gap}
\textbf{Finite MDP.} The boundness of $\SPACE{S}$, $\space{A}$, and reward function can be guaranteed in common RL tasks. However, it is in most of the cases that $\SPACE{S}$ and $\SPACE{A}$ are continuous such that they are infinite. One can discretize the space to get a finite one at the cost of inaccuracy but we will keep the space continuous.

\textbf{Parameterized approximation.} A popular choice of function approximators is deep neural networks (NN) that is differentiable w.r.t. its parameters. However, the general conclusion about the continuity and Lipschitz constant of a NN is still an open problem \cite{kim2021lipschitz}. We still adopt NN in our experiments and leverage clipped gradient update \cite{Zhang2020Why} as the projection operator to keep the parameters of NNs in compact sets as mentioned in \cref{sec.algo}.
Moreover, a Lagrange multiplier network introduced by \cite{ma2021fac} is used for statewise constraint-satisfaction.

\textbf{Step sizes.} Actually we cannot make any schedule of learning rates to make their sum goes to infinity due to a limited number of steps but the sum of the square of learning rates are finite. Furthermore, we utilize $\beta_1(k)>\beta_2(k)>\beta_3(k),\forall k$ to approximate the relationships among the learning rates.

\textbf{Exploration issue for deterministic policies.} Deterministic policies may lack exploration due to overestimation error \cite{Lillicrap2015ddpg, fujimoto2018td3} but this can be mitigated by off-policy updates with a replay buffer, where the learning and the exploration is treated independently. Hence, we construct a stochastic policy giving means and variances of a multivariate Gaussian distribution but only take the means during evaluation.

\subsection{Off-policy Parts}
Implementation details about off-policy RL algorithms compared in safe-control-gym are covered in this section. For fair comparison, all methods are implemented under the same code base, see \cite{guan2021mpg}. The only differences among them is the constrained function and some hyperparameters, which will be explained in detail in the following content.
\subsubsection{Algorithms}
\begin{table}[!htb]
\centering
\caption{Off-policy Algorithms Hyperparameters in safe-control-gym}
\label{tab:off_hyper}
\begin{tabular}{@{}ll@{}}
\toprule
Parameter                                  & Value                                    \\ \midrule
\textit{Shared}                            &                                          \\
\quad Optimizer                                  & Adam ($\beta_1=0.99, \beta_2=0.999$)     \\
\quad Approximation function                     & Multi-layer Perceptron                   \\
\quad Number of hidden layers                    & 2                                        \\
\quad Number of neurons in a hidden layer         & 256                                       \\
\quad Nonlinearity of hidden layer               & ELU                                      \\
\quad Nonlinearity of output layer of multiplier               & Softplus                                      \\
\quad Critic/Constrained function learning rate      & Linear annealing 1e-4 $\rightarrow$ 1e-6                     \\
\quad Actor learning rate                        & Linear annealing 2e-5 $\rightarrow$ 1e-6 \\
\quad Temperature coefficient $\alpha$ learning rate & Linear annealing 8e-5 $\rightarrow$ 8e-6                     \\
\quad Reward discount factor ($\gamma$)          & 0.99                                     \\
\quad Policy update interval ($m_{\pi}$)         & 4                                        \\
\quad Multiplier ascent interval ($m_{\lambda}$) & 12                                       \\
\quad Target smoothing coefficient ($\tau$)      & 0.005                                    \\
\quad Max episode length ($N$)                   & 360                                      \\
\quad Expected Entropy ($\bar{\mathcal{H}}$)     & -2                                       \\
\quad Replay buffer size                         & 50,000                                   \\
\quad Replay batch size                          & 512                                      \\ \midrule
\textit{RAC}                               &                                          \\
\quad Multiplier learning rate                   & Linear annealing 6e-7 $\rightarrow$ 1e-7 \\ \midrule
\textit{SAC-Lagrangian}                    &                                          \\
\quad Multiplier learning rate                   & 3e-4                                   \\ \midrule
\textit{SAC-SI}                            &                                          \\
\quad Multiplier learning rate                   & Linear annealing 1e-6 $\rightarrow$ 1e-7 \\
\quad $\sigma, n, k$                             & 0.1, 2, 1                                \\
\quad $\eta_D$                             & 0.1                                \\ \midrule
\textit{SAC-CBF}                           &                                          \\
\quad Multiplier learning rate                       & \multicolumn{1}{c}{Linear annealing 1e-6 $\rightarrow$ 1e-7} \\
\quad $\mu$                                      & 0.1                                  \\ \midrule
\textit{SAC-Reward Shaping}                &                                          \\
\quad Critic learning rate                       & Linear annealing 3e-5 $\rightarrow$ 3e-6                                    \\
\quad Actor learning rate                        & Linear annealing 8e-5 $\rightarrow$ 8e-6                                      \\
\quad Policy update interval ($m_{\pi}$)         & 1                                        \\
\quad $\rho$                                     & 0.5                                      \\
\bottomrule
\end{tabular}
\end{table}

\textbf{RAC} implementation is similar to common off-policy Lagrangian-based CRL methods but with a different constrained function, i.e. the safety value function. As shown in \cref{alg.rcrl}, at each update step gradients of the critic, the safety value function, the actor and the multiplier are computed through samples collected from the environment. The actor is updated on an intermediate frequency and the multiplier at the slowest frequency, correspond to \cref{assume.step_size}.

\textbf{SAC-Lagrangian} is a SAC-based implementation of RCPO \cite{tessler2018rcpo}. The constraint imposed on the RL problem is $\mathcal{J}_c^\pi=\mathbb{E}_{s\sim\mathcal{D},a\sim\pi}[Q^\pi_c(s,a)]\le \eta$, where $Q^\pi_c(s,a)=\sum_t \gamma^t c(s_t|s_0=s,a_0=a,\pi)$ and $\eta$ is the constraint threshold. Bsecause the constraint is about expectation rather than statewise, the multiplier $\lambda$ is a scalar here, but updated with dual ascent similarly with \eqref{eq.policy_loss}.

\textbf{SAC-Reward Shaping} is a SAC-based implementation of fixed penalty optimization (FPO) mentioned in \cite{achiam2017cpo, tessler2018rcpo}. It only adds an additional term in the reward function to punish constrain violation, without any constrained optimization approaches. The reward function during training is modified into $r'(s,a)=r(s,a)-\rho h(s)$, where $\rho>0$ is a fixed penalty coefficient. Then the networks are updated through standard RL and here SAC. Choosing an appropriate $\rho$ is engineering-intuitive and sometimes the tuning process will be time-consuming.

\textbf{SAC-CBF} is inspired by CBF for safe control in control community \cite{dawson2021rclbf, ma2021cbf, choi2021robust}. The core idea is to make potential unsafe behaviors smooth out exponentially as the agent approaches the safe boundary. The constrained function is called barrier function $B(s,a)\triangleq\dot{h}(s)+\mu h(s) \le0$ where $\mu\in(0,1)$ is a hyperparameter.

\textbf{SAC-SI} leverages a human-designed safe index (SI) as the energy function. The control policy needs to keep the system energy low ($\varphi\le0$) and dissipate the energy when the system is at high energy ($\varphi>0$) \cite{ma2021joint}. Hence, the constraint is $\Delta(s,a)\triangleq\varphi(s')-\max\{\varphi(x)-\eta_D,0\}\le0$, where $\eta_D$ is a slack variable controlling the decent rate of SI. A commonly used SI is in the form of $\varphi(s)=\sigma-(-h(s))^n+k\dot{h}(s)$ \cite{zhao2021issa}, which is chosen in this paper.

\subsubsection{Hyperparameters}
\cref{tab:off_hyper} shows the hyperparameters of algorithms evaluated in safe-control-gym.

\subsection{On-policy Parts}
Implementation details about on-policy RL algorithms including the on-policy version of RCRL benchmarked in Safety-Gym are covered in this section. For fair comparison, all methods are implemented under the same code base, see \cite{Achiam2019safetygym}.
\subsubsection{Algorithms}
\textbf{RCO.} The advantages function for reward value and safety value are denoted as $A^\pi$ and $A^\pi_h$. Denote policy parameterization as $\pi_\theta$, the loss function of RCO when policy parameters, $\theta=\theta_k$ is
\begin{equation}
\mathcal{J}\left(\theta,\xi \right)=\mathbb{E}_{s,a\sim\pi_{\theta_k}}\left\{\overline{A^{\pi_{\theta_{k}}}}(s, a) + \lambda_\xi(s)\overline{A_{h}^{\pi_{\theta_{k}}}}(s, a)\right\}
\label{eq:loss}
\end{equation}
where
$$
\overline{A^{\pi_{\theta_{k}}}}(s,a)=\min \left(\frac{\pi_{\theta}(a \mid s)}{\pi_{\theta_{k}}(a \mid s)} A^{\pi_{\theta_{k}}}(s, a), g\left(\epsilon, A^{\pi_{\theta_{k}}}(s, a)\right)\right),\ g(\epsilon, A)= \begin{cases}(1+\epsilon) A & A \geq 0 \\ (1-\epsilon) A & A<0\end{cases}
$$
$\overline{A^{\pi_{\theta}}_h}(s, a)$ has a similar computation.
\begin{algorithm}[htb]
	\caption{Reachable Constrained Optimization (RCO)}
	\label{alg:rco}
	\begin{algorithmic}[1]
		\REQUIRE Initial policy parameters $\theta_0$, value and cost value function parameters $\omega_0,\phi_0$, multiplier network parameters $\xi_0$
		\FOR{$k=0,1,2,\dots$}
		\STATE Collect set of trajectories $\mathcal D_k=\{\tau_i\}$ with policy $\pi_{\theta_k}$, where $\tau_i$ is a $T$-step episode.
		\STATE Compute reward-to-go $\hat{R}_{t} \doteq \sum\nolimits_{i=t}^{T} \gamma^i r_i$ and cost-to-go $\hat{H_t} \doteq \max_{t} h_t$.
		\STATE Compute advantage functions $A^{\pi_{\theta_{k}}}, A^{\pi_{\theta_{k}}}_h$, according to the value function $V_{\omega_k}$ and safety value function $V_{h_{\phi_k}}$. Compute the multiplier $\lambda_\xi$.
		\STATE Fit value function, safety value function by regression on mean-square error.
		\STATE Update the policy parameters $\theta$ by minimizing \eqref{eq:loss}.
		\STATE Update the multiplier parameters $\xi$ by maximizing \eqref{eq:loss}.
		\ENDFOR
	\end{algorithmic}
\end{algorithm}

\begin{table*}[!htb]
	\caption{Detailed hyperparameters of on-policy algorithm and baselines.}
	\begin{center}
		\begin{tabular}{lc}
			\toprule
			Algorithm & Value \\
			\hline
			\emph{Shared} & \\
			\quad Optimizer &  Adam ($\beta_{1}=0.9, \beta_{2}=0.999$)\\
			\quad Approximation function  &Multi-layer Perceptron \\
			\quad Number of hidden layers & 2\\
			\quad Number of hidden units per layer & 64\\
			\quad Nonlinearity of hidden layer& ELU\\
			\quad Nonlinearity of output layer (other than multiplier net)& linear\\
			\quad Critic learning rate & Linear annealing $3{\rm{e-}}4\rightarrow 0 $\\
			\quad Reward discount factor ($\gamma$) & 0.99\\
			\quad Cost discount factor ($\gamma_c$) & 0.99\\
			\quad GAE parameters  &  $0.95$\\
			\quad Batch size &  $8000$\\
			\quad Max episode length ($N$) & 1000\\
			\quad Actor learning rate & Linear annealing $3{\rm{e-}}4\rightarrow 0 $\\
			\quad Clip ratio &  $0.2$\\
			\quad KL margin &  $1.2$\\
			\midrule
			\emph{RCO, PPO-CBF, PPO-SI} &\\ 
            \quad Nonlinearity of output layer, multiplier net& softplus\\
			\quad Multiplier learning rate & Linear annealing $1{\rm{e-}}4\rightarrow 0 $ \\
			\midrule
			\emph{PPO-Lagrangian} &\\ 
			\quad Init $\lambda$ &  $0.268 (softplus(0))$\\
			\midrule
			\emph{PPO-SI} &\\ 
			\quad $\sigma, n, k$                             & 0.1, 2, 1                                \\
			\midrule
			\emph{PPO-CBF} &\\ 
            \quad $\mu$                   & 0.1 
                       \\
			\bottomrule
		\end{tabular}
	\end{center}
	\vskip -0.1in
	\label{table.hyperonp}
\end{table*}

\textbf{PPO-CBF, PPO-SI.} Constraint functions of these baselines are the same as the off-policy version, only the base algorithm is replaced with PPO \cite{schulman2017ppo}. Compared with \cref{alg:rco}, only the computation of cost-to-go is replaced with the energy-function-based versions.

\subsubsection{Hyperparameters}
See \cref{table.hyperonp}.

\section{Details about Experiments}
\subsection{Quadrotor Trajectory Tracking in safe-control-gym}
\label{sec.scg_details}
Details about the quadrotor trajectory tracking task and training will be covered in this section. The task for the quadrotor is to track a counter-clockwise circle trajectory as accurately as possible while keeping its altitude $z$ between $[0.5,1.5]$, meaning the lower and upper bound of a tunnel. Note that only the next waypoint is accessible to the quadrotor at each time step, so no planning or predictive control in advance exists in this task.

\textbf{Elements of the RL setting.} The state space $\SPACE{S}\subseteq\mathbb{R}^{12}$ consists of the current state of the quadrotor $\VECTOR{x}=[x,\dot{x},z,\dot{z},\theta,\dot{\theta}]^T$ and the information of the next waypoint $\VECTOR{x}^{\rm ref}$, thus $s_t=[\VECTOR{x}_t;\VECTOR{x}^{\rm ref}_t]$. The action is the thrusts given by the two motors on both sides $[T_1,T_2]$, whose value will be normalized to $[0,1]\times[0,1]$. The system dynamics and information about the whole trajectory are inaccessible to the agent. The circle center is at $(0,1)$ and its radius is $1$. The circle is discretized into 360 points so at each time step the reward function is the weighted sum of the difference between $(\VECTOR{x},a)$ and the reference $(\VECTOR{x}^{\rm ref},a^{\rm ref})$, specifically, $r(s_t,a_t)=-(\VECTOR{x}_t-\VECTOR{x}^{\rm ref}_t)^T Q (\VECTOR{x}_t-\VECTOR{x}^{\rm ref}_t) - (a_t-a^{\rm ref}_t)^T R (a_t-a^{\rm ref}_t) $ where $Q=diag(10,1,10,1,0.2,0.2), R=diag(1e-4, 1e-4)$. The constraint is $0.5 \le z \le 1.5$. 

\textbf{Initialization.} For better exploration and generality of the learned feasible set and policy, we initialize the quadrotor uniformly in a rectangle in the $xz$-plane with uniformly distributed vertical and horizontal speed, pitch angle and pitch angle rate, specific ranges in \cref{tab:initial_param}. The nearest discrete waypoint on the trajectory to the initial location of the quadrotor is assigned as the start waypoint. In other words, the start waypoints change as the initial location changes. Quadrotor initialized at the center will be assigned a start waypoint randomly.

\begin{table}[htb]
\vspace{-0.1in}
\centering
\caption{The Initialization Range of Each Variable}
\label{tab:initial_param}
\begin{tabular}{@{}cl@{}}
\toprule
Variable        & \multicolumn{1}{c}{Range} \\ \midrule
$x$             & $[-1.5,+1.5]$             \\
$\dot{x}$       & $[-1.0,+1.0]$             \\
$z$             & $[0.25,+1.75]$            \\
$\dot{z}$       & $[-1.5,+1.5]$             \\
$\theta$        & $[-0.2,+0.2]$             \\
$\dot{\theta}$ & $[-0.1,+0.1]$             \\ \bottomrule
\end{tabular}
\end{table}

\textbf{Training.} At each time step, the quadrotor outputs the two torques based on its state, including the waypoint next to the one in the last time step in the counter-clockwise direction. Then it receives the state transition, the reward and constraint function or cost signal. The $(s,a,r,s',h,c)$ will be sent to the replay buffer. Simultaneously, the learner gets batches of samples from the replay buffer and compute gradients to update the function approximators. The maximum length $T$ of an episode equals to the number of the discrete waypoints, i.e. $360$. The episode will be ended and reset when the maximum length is reached or the quadrotor flies out of the bounded region $\{s \mid |x|\le 2, |z|\le3 \}$.

\textbf{Evaluation.} The policy is evaluated for four runs at one time. It is initialized statically at $(1,1),(-1,1),(0,0.53),(0,1.47)$ respectively in a run where safety can be guaranteed by hovering so the four initial states are feasible. Then the average return ${\sum_{t=0}^{T-1} r(s_t,a_t)}$ and constraint violation rate $\frac{\sum_{t=0}^{T-1} c(s_t)}{T}$ are taken as the performance and constraint-satisfaction metrics, respectively.

\textbf{Feasible sets slices.} The approximated constrained function in each algorithm ($Q_h^\pi(s,a), B^\pi(s,a), \Delta(s,a)$ in RAC, SAC-CBF, SAC-SI, respectively) is a function $f: \mathbb{R}^{12}\mapsto\mathbb{R}$. Hence, we need to project the high-dimensional state to a lower one to visualize the constrained function. Because the imposed constraints is about the $z$-coordinate, we choose to project each state onto the $xz$-plane and observe the changing trend with varying $\dot{z}$. Coordinates in $x$- and $z$-axis are uniformly sampled from the set $\{(x,z)\mid|x|<1.5, 0.5<z<1.5\}$ while $\dot{z}$ is chosen among $\{-1,0,1\}$ and $\dot{x}$, $\theta$ and $\dot{\theta}$ are all set to zero. The tracking waypoint of each sample is the nearest one on the circle trajectory to the $(x,z)$ sample. Then we generate state $s$ for a given $(x,z)$ tuple according to the aforementioned rules and get action $a$ from the trained policy. The constrained value can be calculated with $f(s,a)$.


\end{document}